\documentclass[twoside,11pt]{article}

\usepackage{jmlr2e}
\usepackage{amsmath}
\usepackage{multirow}
\usepackage{amsfonts}
\usepackage{booktabs}
\usepackage{algorithm}
\usepackage[noend]{algorithmic}
\usepackage{mathrsfs}
\hypersetup{colorlinks=true,citecolor=blue}
\numberwithin{theorem}{section}
\usepackage{adjustbox}
\usepackage[most]{tcolorbox}
\usepackage{xcolor}
\usepackage{wrapfig}

\newcommand{\br}{\mathbb{R}}

\newcommand{\EE}{\mathbb{E}}
\newcommand{\PP}{\mathbb{P}}
\newcommand{\x}{\mathbf{x}}
\newcommand{\y}{\mathbf{y}}
\newcommand{\z}{\mathbf{z}}
\newcommand{\g}{\mathbf{g}}
\newcommand{\ACal}{\mathcal{A}}

\newcommand{\CCal}{\mathcal{C}}
\newcommand{\FCal}{\mathcal{F}}
\newcommand{\NCal}{\mathcal{N}}
\newcommand{\RCal}{\mathcal{R}}
\newcommand{\XCal}{\mathcal{X}}
\newcommand{\VCal}{\mathcal{V}}
\newcommand{\LCal}{\mathcal{L}}

\newcommand{\argmax}{\mathop{\rm argmax}}
\newcommand{\sign}{\textnormal{sign}}

\usepackage{lastpage}
\jmlrheading{XXX}{2026}{XX-XX}{XX}{XX}{XXX}{Peter Chen and Xi Chen and Wotao Yin and Tianyi Lin}

\ShortHeadings{Comparison-Based Preference Alignment}{Chen, Chen, Yin, and Lin}
\firstpageno{1}

\begin{document}

\title{A Zeroth-Order Paradigm for LLM Preference Alignment}

\author{\name Peter Chen \email pllc@eecs.berkeley.edu \\
       \addr Department of Electrical Engineering and Computer Sciences (EECS) \\
       University of California, Berkeley \\
       Berkeley, CA 94720, USA
       \AND
       \name Xi Chen \email xc13@stern.nyu.edu \\
       \addr Stern School of Business\\
       New York University\\
       New York, NY 10012, USA
       \AND
       \name Wotao Yin \email
       wotao.yin@alibaba-inc.com \\
       \addr Decision Intelligence Lab (Seattle) \\
       DAMO Academy, Alibaba Group U.S. \\
       Bellevue, WA 98004, USA
       \AND
       \name Tianyi Lin \email
       tl3335@columbia.edu \\
       \addr Department of Industrial Engineering and Operations Research (IEOR) \\
       Columbia University \\
       New York, NY 10027, USA
       }

\maketitle
\vspace{-1.75em}
\begin{abstract}
Direct preference alignment methods are widely used to align large language models (LLMs) with human preferences because of their computational and memory efficiency. However, likelihood displacement motivates alternative ways to extract information from preference pairs with small likelihood margins. In this paper, we propose and analyze Comparison-based Preference Optimization (ComPO), a zeroth-order alignment method based on comparison oracles. ComPO extracts directional information from these pairs without directly optimizing a differentiable preference loss on them. We establish a convergence guarantee for its basic offline scheme under smoothness, gradient sparsity, and compatibility between the oracle and a latent objective. We further introduce online ComPO, which retains the offline comparison mechanism and uses unlabeled policy generations for reverse-KL control relative to a reference policy. Following the coverage perspective of preference fine-tuning, we establish a performance guarantee for a basic constrained scheme under local coverage and in-distribution pairwise reward accuracy. Experiments on Mistral, Llama, Gemma-2, Qwen3, and Gemma-3 models demonstrate improvements over existing direct alignment methods, including length-controlled win rates, with pair-level diagnostics providing
evidence consistent with mitigating likelihood displacement.
\end{abstract}

\begin{keywords}
Preference alignment, comparison oracles, zeroth-order optimization, KL regularization, local coverage
\end{keywords}

\section{Introduction}\label{sec:intro}
Generative AI has become an increasingly important tool for building and managing intelligent systems across academia, industry, and government. Large language models (LLMs) are a core part of this progress, with strong capabilities in data organization, retrieval, reasoning, and analysis~\citep{Brown-2020-Language, Chowdhery-2023-Palm, Touvron-2023-Llama, Achiam-2023-GPT, Bubeck-2023-Sparks}. Since these models are trained on large and heterogeneous corpora, they need further alignment with human preferences so that their responses are helpful, harmless, and reliable~\citep{Bai-2022-Training}.
A prominent approach is \textit{reinforcement learning from human feedback} (RLHF)~\citep{Christiano-2017-Deep, Stiennon-2020-Learning}, which first learns a reward model from human preference pairs and then optimizes the policy using reinforcement learning. Despite its empirical success~\citep{Ziegler-2019-Fine, Ouyang-2022-Training, Touvron-2023-Llama, Achiam-2023-GPT}, RLHF requires a multi-stage training pipeline and can be expensive in memory and computation. This motivates direct alignment methods, e.g., direct preference optimization (DPO)~\citep{Rafailov-2023-Direct} and its variants~\citep{Azar-2024-General, Ethayarajh-2024-Model, Park-2024-Disentangling, Xu-2024-Contrastive, Tang-2024-Generalized, Meng-2024-SimPO, Chen-2025-MallowsPO, Zhao-2025-RainbowPO}, which directly optimize the policy using preference pairs and avoid separately training a reward model.

Direct alignment methods are appealing because of their simplicity and stability. Yet, they suffer from a critical issue known as likelihood displacement. \emph{Likelihood displacement} refers to the counter-intuitive situation where training increases the likelihood of preferred responses relative to dispreferred ones, but decreases the absolute probability of the preferred responses, leading to ``unintentional unalignment''~\citep{Pal-2024-Smaug, Tajwar-2024-Preference, Rafailov-2024-From,
Pang-2024-Iterative, Liu-2024-Provably, Yuan-2025-Advancing, Razin-2025-Unintentional}. For example, training a model to prefer \textsc{No} over \textsc{Never} can sharply increase the likelihood of \textsc{Yes}. Practically, this issue can harm LLM behavior by shifting probability mass to unsafe responses. When the prompt asks for steps for a terrorist organization to infiltrate a government agency,
Gemma-2B-it initially generates refusal responses, while DPO training can make the model comply with the unsafe request because likelihood displacement shifts probability mass away from refusal responses; see~\citet[Table~18]{Razin-2025-Unintentional}. Another related issue is \emph{verbosity}, which refers to the tendency of models fine-tuned with RLHF~\citep{Singhal-2024-Long, Kabir-2024-Stack} or direct alignment methods~\citep{Park-2024-Disentangling, Amini-2024-Direct, Rafailov-2024-Scaling} to generate longer responses without a corresponding improvement in quality, resulting in lower efficiency and higher consumption of hardware resources.

Recent works have suggested that likelihood displacement is related to preference pairs whose preferred and dispreferred responses are similar under model-dependent measures~\citep{Pal-2024-Smaug, Razin-2025-Unintentional}. We refer to such small margin pairs as \textit{noisy preference pairs} in this paper (see Eq.~\eqref{eq:noisy-margin}). Existing methods have tried to mitigate likelihood displacement by adding additional regularization~\citep{Pal-2024-Smaug, Rafailov-2024-From}. More recently,~\citet{Razin-2025-Unintentional} proposed to measure the similarity between preferred and dispreferred responses using the centered hidden embedding similarity (CHES) score, and empirically showed that filtering out preference pairs identified by the CHES score as problematic can be more effective for mitigating likelihood displacement than adding supervised fine-tuning (SFT) regularization. This finding highlights the role of data geometry in direct alignment. However, filtering noisy pairs also removes them from training entirely, even though these pairs may still contain useful comparative information.

While DPO provides a computationally convenient framework by maximizing a certain log-likelihood margin between preferred and dispreferred responses, this objective function can be viewed as a \emph{proxy} for the true goal of alignment. This proxy is effective when preference pairs clearly distinguish better responses from worse responses. However, when faced with noisy pairs -- \emph{where the preference signal is weak or ambiguous under model-based similarity measures} -- optimizing a fixed DPO-style objective can lead to adverse effects such as likelihood displacement. In such cases, the pair may still provide useful local information, even if it is not suitable for direct optimization by a margin-based loss. This motivates a comparison-oracle view of preference alignment. Explicitly defining alignment as a single optimizable mathematical objective function is exceptionally challenging. Instead of pursuing such an explicit objective, we ask whether a nearby policy perturbation improves the local behavior of the model on preference pairs. A favorable perturbation should increase the likelihood of the preferred response and decrease the likelihood of the dispreferred response. In this way, noisy preference pairs are treated as comparison signals about a latent alignment objective, rather than as direct samples for a fixed loss function.

In this paper, we propose a zeroth-order preference alignment method based on comparison oracles, called ComPO. Our approach perturbs the current policy, evaluates whether each perturbation increases the likelihood of preferred responses and decreases that of dispreferred responses, and aggregates the resulting one-bit signals to estimate a normalized update direction. This allows low-margin pairs, designated as noisy, to contribute to alignment without directly optimizing a differentiable preference loss on them, complementing standard direct alignment methods applied to clean pairs. We further extend ComPO to control policy deviation using unlabeled online generations, while retaining offline preference pairs as the source of comparison signals. The motivation follows the coverage perspective of~\citet{Song-2024-Importance}: reverse KL can be estimated from generations of the policy being evaluated, and constraining it permits a performance analysis based on the coverage within a prescribed neighborhood of the reference policy rather than over the full policy class. Coverage remains a separate assumption and is not implied by the KL constraint.

Our empirical also examines length-related effects which have been studied in the literature~\citep{Gao-2023-Scaling, Dubois-2023-Alpacafarm, Park-2024-Disentangling, Amini-2024-Direct, Xu-2024-Contrastive, Meng-2024-SimPO, Pang-2024-Iterative}. Although ComPO is not specifically designed to control verbosity, we evaluate its length-controlled (LC) win rates and examine pair-level likelihood changes. We interpret higher LC win rates as improved judged performance after adjustment for response length, rather than direct evidence of shorter responses.

\paragraph{Contributions.} Our contributions can be summarized as follows:
\begin{enumerate}
\item We develop ComPO, a comparison-based method that uses low-margin preference pairs to refine an aligned policy without directly optimizing a differentiable preference loss on those pairs. Its practical offline implementation uses output-layer perturbations and entry-wise thresholding. The online extension retains the same comparison mechanism and uses
unlabeled current-policy generations to adapt the step size.
\item We establish a best-iterate convergence guarantee for the basic offline scheme under smoothness, gradient sparsity, and oracle compatibility. For the basic online scheme, we prove feasibility under an exact reverse-KL constraint and bound the performance gap in terms of in-distribution pairwise reward error under local coverage.
\item We evaluate ComPO on base and instruction-tuned models from the Mistral, Llama, Gemma-2, Qwen3, and Gemma-3 families. The experiments assess its compatibility with direct alignment methods, its design choices, and the effects of online damping and replay. Pair-level likelihood diagnostics complement the benchmark evaluations.
\end{enumerate}

\paragraph{Relationship to the conference version.} A preliminary version of this work appeared at NeurIPS 2025~\citep{Chen-2025-Compo}. It introduced offline ComPO, preference comparison oracle, the convergence analysis, and the original offline experiments. The journal extension adds the online extension, its coverage-based analysis, and experiments on additional model families, including evaluations of online
regularization and replay.

\paragraph{Related works.} Direct preference alignment methods, including DPO~\citep{Rafailov-2023-Direct}, are simple and more stable offline alternatives to RLHF. Several DPO variants with alternative objectives have been proposed, including ranking-based variants beyond pairwise preference data~\citep{Dong-2023-RAFT, Yuan-2023-RRHF, Song-2024-Preference, Chen-2024-Noise, Liu-2025-LiPO} and reference-model-free variants~\citep{Hong-2024-ORPO, Meng-2024-SimPO}. It is well known that DPO suffers from the issues of verbosity~\citep{Park-2024-Disentangling, Amini-2024-Direct, Rafailov-2024-Scaling} and likelihood displacement~\citep{Pal-2024-Smaug, Tajwar-2024-Preference, Rafailov-2024-From, Pang-2024-Iterative, Liu-2024-Provably, Yuan-2025-Advancing}, which can be interpreted from a unified perspective of data curation~\citep{Park-2024-Disentangling, Razin-2025-Unintentional}. Our work continues along this perspective by arguing that these issues can be mitigated by using the information contained in noisy preference pairs for which the reference model assigns similar likelihoods to preferred and dispreferred responses.

Recent work has examined different roles of online data in preference fine-tuning. Online preference optimization can acquire additional labels for responses generated by the current policy, as in the online AI feedback approach of~\citet{Guo-2024-Direct}. In contrast,~\citet{Song-2024-Importance} introduce HyPO, which combines offline preference optimization with reverse-KL regularization estimated from unlabeled online samples. Our online extension follows this separation between preference supervision and regularization,
but uses comparison-derived update directions. A complementary line of work studies active exploration~\citep{Xie-2025-Exploratory} by augmenting online DPO with an explicit exploration bonus to guide the acquisition of preference feedback. Online ComPO does not acquire new preference labels or introduce such a bonus and its analysis concerns policy performance under local coverage.

Comparison-based optimization includes coordinate-search methods~\citep{Jamieson-2012-Query, Matsui-2017-Parallel} and directional estimators such as SCOBO~\citep{Cai-2022-One} and Sign-OPT~\citep{Cheng-2020-SignOPT}. Sign-OPT also provides a stationarity analysis under smoothness and additional assumptions on gradient noise, so nonconvexity alone is not the distinction from that work. ComPO specializes the comparison mechanism to preference alignment: its oracle evaluates preferred- and dispreferred-response likelihood changes, its basic analysis exploits approximately sparse gradients, and its practical implementation uses output-layer perturbations and thresholding. Comparison and ranking feedback have also been studied in bandit optimization~\citep{Yue-2009-Interactively,
Kumagai-2017-Regret, Ding-2018-Preference}, Bayesian optimization~\citep{Astudillo-2020-Multi, Lin-2022-Preference}, and RLHF~\citep{Tang-2024-Zeroth, Zhang-2025-Zeroth}.
Our focus is on extracting comparison signals from low-margin offline preference pairs and combining them with unlabeled online generations for step-size control.
\section{Preliminaries}\label{sec:prelim}
We provide an overview of the setup for direct preference alignment, and recall the definition of comparison oracles and the subroutine for estimating gradients using comparison oracles that are important for designing the basic scheme of our method. We further introduce the reverse-KL and coverage notation used in online ComPO.

\subsection{Direct preference alignment}
Modern LLMs are designed based on the Transformer architecture~\citep{Vaswani-2017-Attention}
and follow user prompts $\x \in \VCal^\star$ to generate responses $\y \in \VCal^\star$, where $\VCal$ is a vocabulary of tokens. We view an LLM as a policy $\pi_\theta(\y|\x)$ which assigns probabilities to responses $\y$ given prompts $\x$. To assign probabilities to each token of $\y$, the policy $\pi_\theta$ operates in an auto-regressive manner as follows,
\begin{equation*}
\pi_\theta(\y | \x) = \prod_{k=1}^{|\y|} \pi_\theta(\y_k | \x, \y_{<k}),
\end{equation*}
where $\theta$ denotes the model parameters (e.g., the parameters of the Transformer architecture) and $\y_{<k}$ denotes the first $k-1$ tokens of $\y$. However, the generations might not be helpful, safe, or reliable, which motivates further alignment of LLMs with human preferences.

We consider the direct preference learning pipeline based on pairwise preference data. Specifically, we assume access to a preference dataset $D$ containing samples $(\x,\y^+,\y^-)$, where $\x$ is a prompt and $(\y^+,\y^-)$ is a pair of preferred and dispreferred responses to $\x$. This pipeline usually includes an initial supervised fine-tuning (SFT) phase, where the model is fine-tuned using the cross-entropy loss and high-quality data for specific downstream tasks. The SFT data can be either independent
of $D$~\citep{Touvron-2023-Llama}, or may consist of prompts and preferred responses from $D$~\citep{Rafailov-2023-Direct}.

Direct alignment methods, such as DPO~\citep{Rafailov-2023-Direct}, optimize the policy $\pi_\theta$ over the preference dataset $D$ without learning a reward model as in RLHF~\citep{Ziegler-2019-Fine, Stiennon-2020-Learning}. This is done by minimizing a contrastive loss as follows, 
\begin{equation}\label{eq:DPO}
\LCal_{\textnormal{DPO}}(\theta) = -\EE_{(\x, \y^+, \y^-) \sim D}\left[\log\sigma\left(\beta\log\tfrac{\pi_\theta(\y^+ | \x)}{\pi_{\textnormal{ref}}(\y^+ | \x)}-\beta\log\tfrac{\pi_\theta(\y^- | \x)}{\pi_{\textnormal{ref}}(\y^- | \x)}\right)\right],
\end{equation}
where $\pi_{\textnormal{ref}}$ is the model after SFT, $\beta$ is a regularization parameter, and $\sigma: \br \rightarrow [0,1]$ is the sigmoid function. The function $\LCal_{\textnormal{DPO}}$ relies on the log-likelihood margin between $\y^+$ and $\y^-$. Thus, DPO improves the relative likelihood margin between the two responses, rather than directly maximizing the likelihood of $\y^+$ and minimizing the likelihood of $\y^-$. During training, the likelihood of $\y^+$ might decrease, and probability mass can be shifted from $\y^+$ to responses with an opposite meaning~\citep{Pal-2024-Smaug, Razin-2025-Unintentional}. A possible reason is that the above objective function is not well suited for extracting information from
noisy preference pairs whose preferred and dispreferred responses have small likelihood margins or are similar under model-based measures.

Empirically, \citet{Razin-2025-Unintentional} show that filtering out similar preference pairs can make DPO more effective. However, noisy preference pairs might still contain useful information that can improve the performance of LLMs. Extracting such information is challenging using a fixed margin-based loss, since maximizing the likelihood of $\y^+$ and minimizing the likelihood of $\y^-$ locally does not by itself define a global alignment objective. The local information we use is comparative: a better policy should assign higher likelihood to $\y^+$ and lower likelihood to $\y^-$. This motivates us to design a new alignment method by directly leveraging the comparison signal in pairwise preference data $(\x,\y^+,\y^-)$ from $D$.

\subsection{Comparison oracles and zeroth-order methods}
To contextualize our proposed method for aligning LLMs with human preferences, we review the definition of comparison oracles and explain how comparison oracles can be used to develop zeroth-order methods.

Given a function $f:\br^d \to \br$ for which neither the function value nor the gradient is accessible, we define a pairwise comparison oracle $\CCal_f$ in its simplest form as follows, 
\begin{definition}
We call $\CCal_f(\theta,\theta'):\br^d \times \br^d \to \{+1,-1\}$ a comparison oracle for
function $f$ if
\begin{equation*}
\CCal_f(\theta, \theta') = \left\{
\begin{array}{cl}
-1, & \textnormal{if } f(\theta') < f(\theta), \\
+1, & \textnormal{otherwise}.
\end{array} \right.
\end{equation*}
In other words, when queried with $\theta$ and $\theta'$, the oracle $C_f(\cdot, \cdot)$ returns $-1$ if $f(\theta')<f(\theta)$ and $+1$ otherwise, with ties assigned to $+1$.
\end{definition}

The key idea behind the subroutine in~\citet{Cai-2022-One} for estimating gradients using comparison oracles is inspired by $1$-bit compressed sensing~\citep{Boufounos-2008-Compressed}. The goal is to recover a signal $\g \in \br^d$ from quantized measurements $y_i=\sign(\z_i^\top \g)$, where $\z_i$ is a random perturbation vector drawn from a rotationally invariant distribution. The theoretical guarantee on the required number of perturbations to obtain an approximate signal was established in~\citet{Plan-2012-Robust} and extended in~\citet{Cai-2022-One}. Notably, for a small perturbation radius $r>0$, we have
\begin{equation*}
\CCal_f(\theta, \theta+r\z_i) = \sign(f(\theta+r\z_i)-f(\theta)) \approx \sign(\z_i^\top \nabla f(\theta)).
\end{equation*}
Here, $\sign(0)=+1$. Thus, the comparison label $y_i=\CCal_f(\theta,\theta+r\z_i)$ serves as an approximate one-bit measurement of $\nabla f(\theta)$.

Another issue is that zeroth-order comparison-based methods can suffer from dimension-dependent iteration complexity bounds~\citep{Jamieson-2012-Query}. This is expected because comparison oracles are even weaker than function-value oracles. This dimension dependence can be mitigated by exploiting sparse gradient structure~\citep{Wang-2018-Stochastic,Golovin-2020-Gradientless,Choromanski-2019-Complexity, Cai-2022-One,Cai-2022-Zeroth}. Indeed, we say that the function $f$ has sparse gradients if $\|\nabla f(\theta)\|_1 \leq \sqrt{s}\|\nabla f(\theta)\|$ for all $\theta\in\br^d$ and some $s\ll d$.

The above discussion gives the subroutine for estimating sparse gradients using comparison oracles. We generate $m$ i.i.d. perturbation vectors, denoted by $\{\z_i\}_{1\leq i\leq m}$, compute $y_i=\CCal_f(\theta,\theta+r\z_i)$ for all $i$, and solve the following optimization problem:
\begin{equation}\label{eq:1BGE}
\hat{\g} = \argmax_{\|\g\|_1 \leq \sqrt{s}, \|\g\| \leq 1} \sum_{i=1}^m y_i \z_i^\top \g,
\end{equation}
where the constraints $\|\g\|_1 \leq \sqrt{s}$ and $\|\g\| \leq 1$ restrict the search to an approximately sparse and normalized set.

In ComPO, the latent function $f$ is viewed as an implicit alignment objective. Instead of assuming access to its function value or gradient, we use offline preference pairs to construct a comparison oracle: a nearby policy is considered better if it assigns a higher likelihood to the preferred response and a lower likelihood to the dispreferred response.

\subsection{Reverse KL and local coverage}\label{subsec:rkl-coverage}
The online extension of ComPO uses unlabeled policy generations for regularization, while the comparison oracle continues to use the fixed offline preference pairs. Let $P_\textnormal{on}$ denote the prompt distribution used for online generation, and let
$\pi_\textnormal{ref}$ be a fixed reference policy. For the online analysis, we consider policies with a common response support and positive probabilities on that support, and assume that the relevant expectations are finite. For any such policy $\pi$, we define its sequence-level reverse KL relative to the reference by
\begin{equation}\label{def:rkl}
D_\textnormal{RKL}(\pi\|\pi_\textnormal{ref}) = \EE_{\x\sim P_\textnormal{on}}[D_\textnormal{KL}(\pi(\cdot|\x)\|\pi_\textnormal{ref}(\cdot|\x))] = \EE_{\x\sim P_\textnormal{on},\y\sim\pi(\cdot|\x)}\left[\log\tfrac{\pi(\y|\x)}{\pi_\textnormal{ref}(\y|\x)}\right], 
\end{equation}
The reverse KL can be estimated using unlabeled generations from the current policy being evaluated. For $\tau>0$, we define the reverse-KL neighborhood of the reference policy by 
\begin{equation}\label{eq:kl-ball}
\Pi_\tau=\{\pi:D_\textnormal{RKL}(\pi\|\pi_\textnormal{ref})\leq\tau\}
\end{equation}
We let $r^\star(\x,\y)$ denote the ground-truth reward. For $\beta>0$, we define the KL-regularized population objective by
\begin{equation}\label{def:obj-kl}
J_\beta(\pi) = \EE_{\x\sim P_\textnormal{on},\y\sim\pi(\cdot|\x)}[r^\star(\x,\y)]-\beta D_\textnormal{RKL}(\pi\|\pi_\textnormal{ref}).
\end{equation}
For a policy $\pi$, we define its implicit reward relative to $\pi_\textnormal{ref}$ by
\begin{equation}\label{def:implicit-reward}
\widehat{r}_\pi(\x,\y)=\beta\log\tfrac{\pi(\y|\x)}{\pi_\textnormal{ref}(\y|\x)}.
\end{equation}
Pairwise reward differences are invariant to prompt-dependent additive constants. We thus measure the accuracy through the following in-distribution pairwise error, where $\y_1$ and $\y_2$ are drawn independently from $\pi_\textnormal{ref}(\cdot|\x)$ conditional on $\x$. Formally, we have
\begin{equation}\label{def:error}
\operatorname{err}(\pi) = \EE_{\x\sim P_\textnormal{on},\y_1,\y_2\sim\pi_\textnormal{ref}(\cdot|\x)}\left[(r^\star(\x,\y_1)-r^\star(\x,\y_2)-\widehat{r}_\pi(\x,\y_1)+\widehat{r}_\pi(\x,\y_2))^2\right].
\end{equation}
Following~\citet{Song-2024-Importance}, we present policy performance in terms of this in-distribution pairwise error under the local coverage condition in the following definition.
\begin{definition}\label{def:local-rkl-coverage}
The reference policy $\pi_\textnormal{ref}$ satisfies local reverse-KL coverage at radius $\kappa>0$ with constant $C_\kappa>0$ if every policy $\mu$ satisfying $D_\textnormal{RKL}(\mu\|\pi_\textnormal{ref})\leq\kappa$ also satisfies
\begin{equation*}
\sup_{\x \in \operatorname{supp}(P_\textnormal{on})} \sup_{\y \in \VCal^\star}\tfrac{\mu(\y|\x)}{\pi_\textnormal{ref}(\y|\x)} \leq C_\kappa,
\end{equation*}
where we use the convention $\frac{0}{0}=0$.
\end{definition}
Local coverage in Definition~\ref{def:local-rkl-coverage} concerns policies within a reverse-KL neighborhood of $\pi_\textnormal{ref}$, which guarantees that restricting the learned policy to $\Pi_\tau$ can allow a performance guarantee to depend on coverage within that neighborhood. The reverse-KL constraint determines the class on which coverage is required but it does not guarantee the bounded density ratio.
\section{Main Results}\label{sec:main-results}
We study how to learn from noisy preference pairs that induce similar likelihoods for preferred and dispreferred responses. We first present the basic offline scheme, which replaces a first-order update driven by a predefined preference loss with a zeroth-order update driven by comparison oracles, and describe the practical offline scheme used for LLM fine-tuning. We then introduce online ComPO, which preserves the offline comparison direction and uses unlabeled current-policy generations for reverse-KL regularization. 

\subsection{Offline preference alignment}
The key idea behind ComPO is to use noisy preference pairs only to compare nearby policies. For a nonempty $S\subseteq D$, define
\begin{equation} \label{eq:pref-delta}
\begin{array}{rcl}
\Delta^+_S(\theta,\theta') & = & \tfrac{1}{|S|} \sum_{(\x,\y^+,\y^-) \in S}\left(\log \pi_{\theta'}(\y^+|\x)-\log \pi_\theta(\y^+|\x)\right), \\
\Delta^-_S(\theta,\theta') & = & \tfrac{1}{|S|} \sum_{(\x,\y^+,\y^-)\in S}\left(\log \pi_{\theta'}(\y^-|\x)-\log \pi_\theta(\y^-|\x)\right).
\end{array}
\end{equation}
We then provide the formulation of preference comparison oracle for LLM alignment below:
\begin{definition}[Preference comparison oracle]\label{def:pref-CO}
For a set $S \subseteq D$, the preference comparison oracle $\CCal^S_\pi(\theta,\theta'): \br^d \times \br^d \mapsto \{+1,-1\}$ is defined by
\begin{equation*}
\CCal^S_\pi(\theta,\theta') = \begin{cases}
-1, & \textnormal{if } \Delta^+_S(\theta,\theta')>0 \textnormal{ and } \Delta^-_S(\theta,\theta')<0,\\
+1, & \textnormal{otherwise}.
\end{cases}
\end{equation*}
\end{definition}
Thus, $\CCal^S_\pi(\theta,\theta')=-1$ means that $\theta'$ is preferred to $\theta$ according to the likelihood comparison induced by $S$. When $S$ contains one pair, this reduces to the pairwise oracle. When $S$ is a mini-batch, the oracle uses average preferred and dispreferred likelihood changes. Given a set of perturbations $\{\z_i\}_{i=1}^m$, ComPO queries $y_i=\CCal^S_\pi(\theta_t,\theta_t+r z_i)$ for $i=1,\ldots,m$ and applies the sparse 1-bit estimator from Eq.~\eqref{eq:1BGE} as follows, 
\begin{equation}\label{eq:compo-estimator}
\hat{\g} = \argmax_{\|\g\|_1 \leq \sqrt{s}, \|\g\| \leq 1} \sum_{i=1}^m y_i\z_i^\top\g.
\end{equation}
This is the only specialization of the comparison-oracle subroutine needed for offline ComPO. 

The following theorem establishes a best-iterate convergence guarantee for the basic offline scheme under smoothness, gradient sparsity, and oracle compatibility. 
\begin{theorem}\label{thm:offline}
Fix a nonempty comparison set $S \subseteq D$ and $1\leq s\leq d$. Suppose that there exists an $\ell$-smooth function $f:\br^d \to \br$, with $\ell>0$, that is bounded below and satisfies
\begin{enumerate}
\item For all $(\theta,\theta')$, we have $\CCal^S_\pi(\theta,\theta')=-1$ if $f(\theta')<f(\theta)$ and $\CCal^S_\pi(\theta,\theta')=1$ otherwise. 
\item The gradients of $f$ are approximately sparse: $\|\nabla f(\theta)\|_1\le \sqrt{s}\|\nabla f(\theta)\|$ for all $\theta \in \br^d$. 
\end{enumerate}
Let $\Delta>0$ satisfy $f(\theta_1)-\inf_{\theta \in \br^d} f(\theta) \leq \Delta$. For any $\epsilon,\Lambda \in (0,1)$, we choose
\begin{equation*}
T=\left\lceil\tfrac{10\ell\Delta}{\epsilon^2}\right\rceil,\quad \eta=\sqrt{\tfrac{2\Delta}{\ell T}},\quad r=\tfrac{\epsilon}{40\ell\sqrt d}, \quad m=\left\lceil c_m\left(s\log\left(\tfrac{2d}{s}\right)+\log\left(\tfrac{2T}{\Lambda}\right)\right)\right\rceil, 
\end{equation*}
where $c_m$ is a sufficiently large constant. Suppose that the perturbations at each iteration are drawn independently of the past and Eq.~\eqref{eq:compo-estimator} is solved exactly. Then, the iterates generated by Algorithm~\ref{alg:offline-basic} satisfy
\begin{equation*}
\PP\left(\min_{1 \leq t \leq T} \|\nabla f(\theta_t)\| < \epsilon\right) \geq 1-\Lambda .
\end{equation*}
Consequently, the total number of preference-comparison oracle calls is bounded by
\begin{equation*}
O\left(\left(1+\tfrac{\ell\Delta}{\epsilon^2}\right)\left(s\log\left(\tfrac{2d}{s}\right)
+\log\left(\tfrac{2+\ell\Delta\epsilon^{-2}}{\Lambda}\right)\right)\right).
\end{equation*} 
\end{theorem}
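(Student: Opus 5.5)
By assumption~1, the preference oracle $\CCal^S_\pi$ coincides with the comparison oracle $\CCal_f$ of the latent $\ell$-smooth function $f$. The theorem therefore reduces to a convergence statement for the comparison-based normalized descent of \citet{Cai-2022-One}. I assume Algorithm~\ref{alg:offline-basic} performs the update $\theta_{t+1}=\theta_t-\eta\hat{\g}_t$, where $\hat{\g}_t$ is the solution of Eq.~\eqref{eq:compo-estimator} built from $m$ fresh Gaussian perturbations. The labels equal $y_i=\sign(f(\theta_t+r\z_i)-f(\theta_t))$.

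The plan has three steps.

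\textbf{Step 1 (one-step descent).} By $\ell$-smoothness,
\begin{equation*}
f(\theta_{t+1})\le f(\theta_t)-\eta\langle\nabla f(\theta_t),\hat{\g}_t\rangle+\tfrac{\ell\eta^2}{2}\|\hat{\g}_t\|^2 .
\end{equation*}
Since $\|\hat{\g}_t\|\le 1$, the quadratic term is at most $\ell\eta^2/2$. Hence it suffices to show that, with high probability, $\langle\nabla f(\theta_t),\hat{\g}_t\rangle\ge \tfrac12\|\nabla f(\theta_t)\|$ whenever $\|\nabla f(\theta_t)\|\ge\epsilon$.

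\textbf{Step 2 (1-bit recovery with flipped labels).} I compare $y_i$ with the ideal labels $\sign(\z_i^\top\nabla f(\theta_t))$. Smoothness gives
\begin{equation*}
|f(\theta_t+r\z_i)-f(\theta_t)-r\z_i^\top\nabla f(\theta_t)|\le \tfrac{\ell r^2}{2}\|\z_i\|^2 .
\end{equation*}
So $y_i$ can differ from the ideal label only if $|\z_i^\top \nabla f(\theta_t)/\|\nabla f(\theta_t)\||\le \ell r\|\z_i\|^2/(2\epsilon)$. On a high-probability event, $\|\z_i\|^2\lesssim d$. With $r=\epsilon/(40\ell\sqrt d)$, the threshold becomes of order $\sqrt d/80$ times $1/\sqrt d$ scaling, i.e., a small constant. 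By Gaussian anti-concentration, the fraction of flipped signs is therefore a small constant. Rather than carry out this bookkeeping myself, I would invoke the robust 1-bit compressed sensing guarantee of \citet{Plan-2012-Robust}, in the form extended by \citet{Cai-2022-One}. That result states: for an approximately $s$-sparse unit vector $\g^\star=\nabla f/\|\nabla f\|$ (assumption~2 puts $\g^\star$ in the feasible set $\{\|\g\|_1\le\sqrt s,\|\g\|\le1\}$), $m\ge c_m(s\log(2d/s)+\log(1/\delta))$ measurements with at most a small constant fraction of adversarially flipped signs yield $\|\hat{\g}-\g^\star\|\le 1/2$ with probability $1-\delta$. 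When $\|\hat{\g}\|\le1$, this implies $\langle\g^\star,\hat{\g}\rangle\ge \tfrac12$.

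I expect this step to be the main obstacle. The flips depend on the $\z_i$ themselves, so they are not independent noise, and the constant $40$ in $r$ must be matched against the constant in the robust recovery bound. I would first try the uniform-over-the-feasible-set deviation bound for $\frac1m\sum_i y_i\z_i^\top\g$, whose Gaussian width is $\sqrt{s\log(2d/s)}$, and control the flipped terms by Cauchy--Schwarz.

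\textbf{Step 3 (telescoping).} I set $\delta=\Lambda/T$ and take a union bound over iterations, which explains the $\log(2T/\Lambda)$ term in $m$. On the resulting event, suppose for contradiction that $\|\nabla f(\theta_t)\|\ge\epsilon$ for all $t$. Then each step gives $f(\theta_{t+1})\le f(\theta_t)-\eta\epsilon/2+\ell\eta^2/2$. Summing over $t$ yields
\begin{equation*}
\Delta\ge T\eta\epsilon/2-T\ell\eta^2/2 .
\end{equation*}
Substituting $\eta=\sqrt{2\Delta/(\ell T)}$ gives $T\ell\eta^2/2=\Delta$, so $\sqrt{T\Delta/(2\ell)}\,\epsilon\le 2\Delta$, i.e. $T\le 8\ell\Delta/\epsilon^2$. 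This contradicts $T\ge10\ell\Delta/\epsilon^2$. Hence some iterate satisfies $\|\nabla f(\theta_t)\|<\epsilon$, with probability at least $1-\Lambda$.

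Finally, the oracle count is $Tm$. Using $T\le 1+10\ell\Delta/\epsilon^2$ gives the stated $O(\cdot)$ bound.
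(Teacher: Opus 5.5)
Your skeleton matches the paper's: smoothness certifies the labels outside a thin slab, sparse one-bit recovery gives $\|\hat{\g}_t-\bar{\g}_t\|\le\frac12$, a union bound with $\delta=\Lambda/T$ handles all iterations, and the descent inequalities telescope. Step~3 is correct. The gap is in Step~2, which carries all of the technical content.

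\emph{The perturbation distribution.} Algorithm~\ref{alg:offline-basic} draws $\z_i$ uniformly from the unit sphere, not from a Gaussian, and under your Gaussian reading the scaling claim fails. For Gaussian $\z_i$ we have $\z_i^\top\bar{\g}\sim N(0,1)$. Smoothness certifies a label only outside the region $|\z_i^\top\bar{\g}|\le \ell r\|\z_i\|^2/(2\|\nabla f\|)$, and this region has half-width about $\sqrt d/80$ when $\|\nabla f\|\approx\epsilon$. That holds almost all of the mass once $d$ is large. So with $r=\epsilon/(40\ell\sqrt d)$, the uncertified fraction of labels is not a small constant. The radius is calibrated to the sphere instead. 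There $\|\z_i\|=1$, so the slab has half-width at most $1/(80\sqrt d)$ when $\|\nabla f\|\ge\epsilon$. Since the density of $\z^\top\bar{\g}$ near the origin is at most $\sqrt d$, the slab has probability at most $1/40$.

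\emph{The recovery step.} The result you invoke does not close the step. The Plan--Vershynin adversarial-flip bound is stated for Gaussian measurement vectors. Whether the flip fraction forced by the constant $40$ yields error at most $\frac12$ is exactly what you leave unverified.

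The missing idea, which is the content of Proposition~\ref{prop:grad-est}, is that the dependence of $y_i$ on $\z_i$ is harmless. Given $\theta_t$, the oracle is deterministic and the perturbations are fresh, so the pairs $(\z_i,y_i)$ are i.i.d. You can therefore center at the population mean $\EE[y\z]$ rather than compare with ideal labels sample by sample. Write $\frac1m\sum_i y_i\z_i=\kappa\bar{\g}+A+B$, where $\kappa=\EE|\z^\top\bar{\g}|\ge\sqrt{2/(\pi d)}$, $A=\EE[(y-\sign(\z^\top\bar{\g}))\z]$ is the bias, and $B$ is the empirical fluctuation.

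The bias satisfies $\|A\|\le\kappa/5$. This holds because $y-\sign(q)$ vanishes off the slab, and conditionally on $q=\z^\top\bar{\g}$ the component of $\z$ orthogonal to $\bar{\g}$ is rotationally symmetric.

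The fluctuation satisfies $\sup_{\g\in K}|B^\top\g|\le\kappa/80$, where $K=\{\|\g\|_1\le\sqrt s,\ \|\g\|\le1\}$. The identity $|y_i\z_i^\top v|=|\z_i^\top v|$ makes each summand sub-Gaussian at scale $1/\sqrt d$ regardless of the dependence. A net over $\lceil s\rceil$-sparse supports and a block decomposition of $\g\in K$ then finish the bound.

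With $F_m(\g)=\frac1m\sum_i y_i\z_i^\top\g$, this gives $F_m(\bar{\g})-F_m(\g)\ge\kappa r^2/2-\kappa r/5-\kappa/40>0$ whenever $r=\|\g-\bar{\g}\|>\frac12$.

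Your Cauchy--Schwarz fallback, which treats the flips adversarially, can also be completed on the sphere. It additionally needs a Chernoff bound on the number of slab indices and a uniform bound on $\frac1m\sum_i(\z_i^\top\mathbf{w})^2$ over $\mathbf{w}\in K-K$, and neither appears in the proposal.
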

\begin{remark}
Theorem~\ref{thm:offline} provides a best-iterate convergence guarantee for the basic offline scheme under the stated assumptions. Since the objective $f$ is latent, its gradient norm is not available as a practical stopping criterion. The result nevertheless provides a theoretical benchmark: for fixed sparsity level $s$, the number of comparison queries depends only logarithmically on the ambient dimension. The practical implementation below approximates the basic estimator to accommodate the scale of LLM fine-tuning.
\end{remark}
\begin{algorithm}[!t]\small
\caption{Offline ComPO: Basic Scheme}
\label{alg:offline-basic}
\begin{algorithmic}[1]
\STATE \textbf{Input:} initial parameter $\theta_1 \in \br^d$, comparison set $S \subseteq D$, step size $\eta > 0$, sparsity ratio $s \ll d$, sampling radius $r > 0$, number of perturbations $m \geq 1$, and iteration number $T\geq 1$.
\FOR{$t=1,2,\ldots,T$}
\STATE Draw $m$ i.i.d. samples uniformly from the unit sphere in $\br^d$, denoted by $\{\z_i\}_{i=1}^m$.
\STATE Compute $y_i=\CCal^S_\pi(\theta_t,\theta_t+r\z_i)$ for $i=1,\ldots,m$.
\STATE Compute $\hat{\g}_t$ using Eq.~\eqref{eq:compo-estimator}.
\STATE Update $\theta_{t+1}=\theta_t-\eta\hat{\g}_t$.
\ENDFOR
\STATE \textbf{Output:} $\theta_{T+1}$.
\end{algorithmic}
\end{algorithm}
\begin{algorithm}[!t]\small
\caption{Offline ComPO: Practical Scheme}
\label{alg:offline-practical}
\begin{algorithmic}[1]
\STATE \textbf{Input:} initial parameter $\theta_1=[\bar{\theta};\theta^o_1]$, batches $\{S_t\}_{t=1}^T$, step size $\gamma$, sampling radius $r$, number of perturbations $m \geq 1$, clipping thresholds $\lambda_g,\lambda$, and iteration number $T\geq 1$.
\FOR{$t=1,2,\ldots,T$}
\STATE Draw $m$ i.i.d. samples $\{\z_i\}_{i=1}^m$ uniformly from the unit sphere in $\br^{d_o}$. 
\STATE Query $y_i=\CCal_\pi^{S_t}([\bar{\theta}; \theta^o_t], [\bar{\theta}; \theta^o_t+r z_i])$ for all $i=1,\ldots,m$. 
\STATE Set $\textbf{u}_t=\sum_{i=1}^m y_i\z_i$. If $\textbf{u}_t \neq 0$, set $\hat{\g}^o_t=\textbf{u}_t/\|\textbf{u}_t\|$. Otherwise, set $\hat{\g}^o_t=0$.
\STATE Clip $\hat{\g}^o_t$ by zeroing out entries whose magnitude is less than $\lambda_g$.
\STATE Set $p_t=\frac{|\{i:y_i=-1\}|}{m}$.
\IF{$p_t>\lambda$}
\STATE $\theta^o_{t+1}=\theta^o_t-\gamma p_t\hat{\g}^o_t$.
\ELSE
\STATE $\theta^o_{t+1}=\theta^o_t$.
\ENDIF
\ENDFOR
\STATE \textbf{Output:} $\theta_{T+1}=[\bar{\theta};\theta^o_{T+1}]$.
\end{algorithmic}
\end{algorithm}

\paragraph{Practical scheme.} Applying the basic scheme to all model parameters is computationally expensive for LLMs. We therefore perturb only the output-layer weights
$\theta^o \in \br^{d_o}$ and freeze the remaining parameters $\bar{\theta}$, so that $\theta=[\bar{\theta};\theta^o]$. We also replace the exact solution of Eq.~\eqref{eq:compo-estimator} with a normalized sum of signed perturbations followed by entry-wise clipping.

The practical pipeline partitions the dataset using the reference model. In particular, we define
\begin{equation}\label{eq:noisy-margin}
D_\textnormal{noisy} = \left\{(\x,\y^+,\y^-)\in D: |\log\pi_\textnormal{ref}(\y^+|\x)
-\log\pi_\textnormal{ref}(\y^-|\x)| \leq \delta_\textnormal{margin}\right\},
\end{equation}
and let $D_\textnormal{clean}=D\setminus D_\textnormal{noisy}$. The term noisy refers to this low-margin subset and does not presume that its preference labels are incorrect.
We first apply a direct alignment method, such as DPO or SimPO, to $D_\textnormal{clean}$ and then apply Algorithm~\ref{alg:offline-practical} to $D_\textnormal{noisy}$. For DPO in the first stage, we denote the resulting procedure by DPO$_\textnormal{clean}$+ComPO.

\subsection{Online ComPO}\label{sec:online-kl-compo}
We introduce an online extension of ComPO that retains the offline comparison mechanism and uses unlabeled policy generations for reverse-KL control. Following~\citet{Song-2024-Importance}, we restrict the policy to the class $\Pi_\tau$ in Eq.~\eqref{eq:kl-ball}, so that the analysis requires coverage only within this neighborhood. Since the update direction is obtained from comparisons rather than the gradient of an explicit preference loss, the basic scheme implements this restriction through a feasibility check on each candidate update. The practical scheme uses the samples from the current policy to adjust the step size.

At iteration $t$, we compute the same comparison direction $\hat{\g}_t$ as in Algorithm~\ref{alg:offline-basic} and form a single candidate using a fixed step size $\eta>0$:
\begin{equation}\label{eq:basic-online-rkl}
\tilde{\theta}_{t+1}=\theta_t-\eta\hat{\g}_t, \quad \tilde{D}_t=D_\textnormal{RKL}(\pi_{\tilde{\theta}_{t+1}}\|\pi_\textnormal{ref}).
\end{equation}
Given a reverse-KL radius $\tau>0$, we accept the candidate if it is feasible and otherwise leave the policy unchanged:
\begin{equation}\label{eq:basic-online-step}
\theta_{t+1}= \begin{cases}
\tilde{\theta}_{t+1}, & \textnormal{if } \tilde{D}_t \leq \tau,\\ \theta_t, & \textnormal{otherwise}.
\end{cases}
\end{equation}
The basic scheme evaluates the candidate policy's reverse KL exactly. Starting from a feasible policy, the accept-or-reject rule preserves feasibility by retaining the previous iterate whenever the candidate falls outside $\Pi_\tau$.

\begin{algorithm}[!t]\small
\caption{Online ComPO: Basic Scheme}\label{alg:basic-online}
\begin{algorithmic}[1]
\STATE \textbf{Input:} initial parameter $\theta_1 \in \br^d$ satisfying $\pi_{\theta_1} \in \Pi_\tau$, comparison set $S \subseteq D$, online prompt distribution $P_\textnormal{on}$, reference policy $\pi_\textnormal{ref}$, step size $\eta>0$, reverse-KL radius $\tau>0$, sparsity ratio $s\ll d$, sampling radius $r>0$, number of perturbations $m \geq 1$, and iteration number $T \geq 1$.
\FOR{$t=1,2,\ldots,T$}
\STATE Draw $m$ i.i.d. samples uniformly from the unit sphere in $\br^d$, denoted by $\{\z_i\}_{i=1}^m$.
\STATE Compute $y_i=\CCal^S_\pi(\theta_t,\theta_t+r\z_i)$ for $i=1,\ldots,m$.
\STATE Compute $\hat{\g}_t$ using Eq.~\eqref{eq:compo-estimator}.
\STATE Form $\tilde{\theta}_{t+1}$ and evaluate $\tilde{D}_t$ by Eq.~\eqref{eq:basic-online-rkl}.
\STATE Set $\theta_{t+1}$ according to Eq.~\eqref{eq:basic-online-step}.
\ENDFOR
\STATE \textbf{Output:} $\theta_{T+1}$.
\end{algorithmic}
\end{algorithm}

The following theorem establishes feasibility and relates in-distribution pairwise reward accuracy to policy performance under local coverage.
\begin{theorem}\label{thm:online}
Fix $\beta,\tau>0$. Suppose that Algorithm~\ref{alg:basic-online} evaluates each candidate policy's reverse KL exactly. Then, the generated iterates satisfy $\pi_{\theta_t} \in \Pi_\tau$ for all $t=1,\ldots,T+1$. If $\pi_\textnormal{ref}$ satisfies local reverse-KL coverage at radius $\tau$ with constant $C_\tau$, we have
\begin{equation*}
\sup_{\pi\in\Pi_\tau}J_\beta(\pi) - J_\beta(\pi_{\theta_t}) \leq C_\tau\sqrt{\operatorname{err}(\pi_{\theta_t})}, \quad \textnormal{for all } t=1,\ldots,T+1. 
\end{equation*}
For any $\epsilon>0$, an iterate satisfying $\operatorname{err}(\pi_{\theta_t}) \leq \epsilon$ satisfies $\sup_{\pi\in\Pi_\tau}J_\beta(\pi)-J_\beta(\pi_{\theta_t})
\leq C_\tau\sqrt{\epsilon}$. 
\end{theorem}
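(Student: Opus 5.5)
Feasibility follows by induction on $t$. The base case is the input assumption $\pi_{\theta_1}\in\Pi_\tau$. For the inductive step, suppose $\pi_{\theta_t}\in\Pi_\tau$. Rule \eqref{eq:basic-online-step} sets $\theta_{t+1}=\tilde\theta_{t+1}$ only when $\tilde D_t\le\tau$, i.e., only when $\pi_{\tilde\theta_{t+1}}\in\Pi_\tau$, because $\tilde D_t$ is computed exactly. Otherwise $\theta_{t+1}=\theta_t$, which is feasible by hypothesis. Nothing about $\hat\g_t$, $\eta$, or the oracle is used here.

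For the performance bound, fix $t$, write $\hat\pi=\pi_{\theta_t}$ and $\hat r=\widehat r_{\hat\pi}$, and fix an arbitrary comparator $\pi\in\Pi_\tau$. The plan is to bound $J_\beta(\pi)-J_\beta(\hat\pi)\le C_\tau\sqrt{\operatorname{err}(\hat\pi)}$ and then take the supremum.

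\begin{enumerate}
\item \emph{Rewrite the regularizer through the implicit reward.} Since $\beta D_\textnormal{RKL}(\mu\|\pi_\textnormal{ref})=\EE_{\x,\y\sim\mu}[\beta\log\frac{\mu}{\pi_\textnormal{ref}}]$, the objective of $\hat\pi$ satisfies
\begin{equation*}
J_\beta(\hat\pi)=\EE_{\x,\y\sim\hat\pi}[r^\star-\hat r].
\end{equation*}
For the comparator, write $\log\frac{\pi}{\pi_\textnormal{ref}}=\log\frac{\pi}{\hat\pi}+\log\frac{\hat\pi}{\pi_\textnormal{ref}}$. Because $\EE_\pi\log\frac{\pi}{\hat\pi}=D_\textnormal{KL}(\pi\|\hat\pi)\ge0$, this gives $\beta D_\textnormal{RKL}(\pi\|\pi_\textnormal{ref})\ge\EE_{\x,\y\sim\pi}[\hat r]$. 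Combining,
\begin{equation*}
J_\beta(\pi)-J_\beta(\hat\pi)\le \EE_{\x}\left[\EE_{\y\sim\pi}[h]-\EE_{\y\sim\hat\pi}[h]\right],\quad h=r^\star-\hat r.
\end{equation*}
This is the standard step in the analysis of \citet{Song-2024-Importance}.
\item \emph{Pass to the pairwise quantity.} The difference of expectations under two policies is invariant to prompt-dependent shifts of $h$, so it equals $\EE_\x\EE_{\y_1\sim\pi,\y_2\sim\hat\pi}[h(\x,\y_1)-h(\x,\y_2)]$. Set $\Delta(\x,\y_1,\y_2)=h(\x,\y_1)-h(\x,\y_2)$.
\item \emph{Change measure to $\pi_\textnormal{ref}\otimes\pi_\textnormal{ref}$.} Both $\pi$ and $\hat\pi$ lie in $\Pi_\tau$ (the latter by the first part), so Definition~\ref{def:local-rkl-coverage} bounds each density ratio by $C_\tau$. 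Importance weighting, then Cauchy--Schwarz, then Definition~\ref{def:error} give
\begin{equation*}
\EE\left[\tfrac{\pi(\y_1|\x)}{\pi_\textnormal{ref}(\y_1|\x)}\tfrac{\hat\pi(\y_2|\x)}{\pi_\textnormal{ref}(\y_2|\x)}\Delta\right]\le\sqrt{\EE_{\pi_\textnormal{ref}}[w^2]}\sqrt{\operatorname{err}(\hat\pi)}.
\end{equation*}
\end{enumerate}

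The main obstacle is the constant in step 3. Naively bounding both ratios by $C_\tau$ yields $C_\tau^2$, not $C_\tau$. I would instead bound $\EE[w^2]\le C_\tau\,\EE[w]$ for each factor separately; since each ratio has $\pi_\textnormal{ref}$-mean one, this gives $\EE[w^2]\le C_\tau^2$ and hence $\sqrt{\EE[w^2]}\le C_\tau$. More precisely, the plan is to write $\EE[(w_1w_2)^2]=\EE[w_1^2]\EE[w_2^2]\le C_\tau\cdot C_\tau$ using conditional independence of $\y_1,\y_2$ given $\x$.

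The subtle point is that this product bound holds only conditionally on $\x$. I would therefore apply Cauchy--Schwarz inside $\EE_\x$ first and then Jensen over $\x$. This keeps the constant at $C_\tau$ and yields $J_\beta(\pi)-J_\beta(\hat\pi)\le C_\tau\sqrt{\operatorname{err}(\hat\pi)}$.

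Taking the supremum over $\pi\in\Pi_\tau$ gives the stated bound for every $t=1,\ldots,T+1$. The final claim follows from monotonicity of the square root: if $\operatorname{err}(\pi_{\theta_t})\le\epsilon$, then $C_\tau\sqrt{\operatorname{err}(\pi_{\theta_t})}\le C_\tau\sqrt{\epsilon}$.
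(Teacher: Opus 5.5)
Your proof is correct. Feasibility and your steps 1--2 match the paper's argument exactly: the paper writes $J_\beta(\pi)-J_\beta(\pi_{\theta_t})=\EE_{Q_\pi}[e(\x,\y_1)-e(\x,\y_2)]-\beta D_\textnormal{RKL}(\pi\|\pi_{\theta_t})$ and drops the KL term. The difference is the order of operations in step 3.

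The paper first applies Jensen under $Q_\pi$, giving $\EE_{Q_\pi}[\Delta]\le(\EE_{Q_\pi}[\Delta^2])^{1/2}$. Only then does it change measure inside the nonnegative second moment, using the crude bound $dQ_\pi/dQ_\textnormal{ref}\le C_\tau^2$. The square root turns $C_\tau^2$ into $C_\tau$, so the obstacle you identify never arises. No mean-one or conditional-independence argument is needed.

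Your order (importance weighting, then Cauchy--Schwarz under $Q_\textnormal{ref}$) forces you to control $\EE_{Q_\textnormal{ref}}[w^2]$. Your fix is sound, but simpler than you make it. Since $w=dQ_\pi/dQ_\textnormal{ref}$, you have $\EE_{Q_\textnormal{ref}}[w^2]=\EE_{Q_\pi}[w]\le C_\tau^2$ directly. Also, the conditional bound $\EE[w^2\mid\x]\le C_\tau^2$ holds for every $\x$, so an unconditional Cauchy--Schwarz already works. The per-prompt Cauchy--Schwarz followed by Jensen is correct but unnecessary.

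What your route buys is a slightly more general intermediate inequality, $J_\beta(\pi)-J_\beta(\pi_{\theta_t})\le(\EE_{Q_\textnormal{ref}}[w^2])^{1/2}\sqrt{\operatorname{err}(\pi_{\theta_t})}$. This depends only on second moments of the density ratios, a $\chi^2$-type coverage coefficient, rather than their supremum. The paper's route is shorter but genuinely uses the $L^\infty$ ratio bound of Definition~\ref{def:local-rkl-coverage}.
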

Theorem~\ref{thm:online} combines the feasibility preservation with a coverage-based performance bound following~\citet{Song-2024-Importance}. The reverse-KL constraint restricts the policies under consideration to $\Pi_\tau$, so that this guarantee requires coverage within the neighborhood rather than over the entire policy class. Within this neighborhood, smaller pairwise reward error gives a tighter performance bound. 
\begin{algorithm}[!t]\small
\caption{Online ComPO: Practical Scheme}\label{alg:online-practical}
\begin{algorithmic}[1]
\STATE \textbf{Input:} initial parameter $\theta_1=[\bar{\theta};\theta^o_1]$, preference dataset $D$, online prompts $\XCal_\textnormal{on}$, reference policy $\pi_\textnormal{ref}$, margin threshold $\delta_\textnormal{margin}$, step-size scale $\gamma>0$, damping strength $\rho\geq0$, threshold $\tau_p \geq 0$, sampling radius $r>0$, number of perturbations $m\geq1$, online batch size $B\geq1$, clipping thresholds $\lambda_g,\lambda>0$,  iteration number $T \geq 1$, re-sampling window $n\geq1$, and replay ratio $\alpha\in[0,1]$.
\STATE Construct $D_\textnormal{noisy}$ using Eq.~\eqref{eq:noisy-margin}.
\STATE Initialize the replay buffer $\RCal \leftarrow \emptyset$ and the current successful-batch buffer $\ACal \leftarrow \emptyset$.
\FOR{$t=1,2,\ldots,T$}
{\IF{$t>1$ and $(t-1)\bmod n=0$}
\STATE Set $\RCal \leftarrow \ACal$ and $\ACal \leftarrow \emptyset$.
\ENDIF
\STATE Draw a replay indicator $b_t \sim \operatorname{Bernoulli}(\alpha)$.
\IF{$b_t=1$ and $\RCal \neq \emptyset$} 
\STATE Sample a previously successful preference mini-batch $S_t$ uniformly from $\RCal$.
\ELSE 
\STATE Sample a new noisy preference mini-batch $S_t\subseteq D_{\textnormal{noisy}}$.
\ENDIF}
\STATE Draw $m$ i.i.d. samples uniformly from the unit sphere in $\br^{d_o}$, denoted by $\{\z_i\}_{i=1}^m$.
\STATE Query $y_i=\CCal_\pi^{S_t}([\bar{\theta};\theta^o_t],[\bar{\theta};\theta^o_t+r\z_i])$ for $i=1,\ldots,m$.
\STATE Set $\textbf{u}_t=\sum_{i=1}^m y_i\z_i$ and $\hat{\g}^o_t=\textbf{u}_t/\|\textbf{u}_t\|$ if $\textbf{u}_t\neq0$; otherwise set $\hat{\g}^o_t=0$. Clip $\hat{\g}^o_t$ by zeroing out entries whose magnitude is less than $\lambda_g$.
\STATE Sample $\{\tilde{\x}_j\}_{j=1}^B \subseteq \XCal_\textnormal{on}$, generate $\tilde{\y}_j \sim \pi_{\theta_t}(\cdot|\tilde{\x}_j)$, and compute $\hat{d}_t$ and $\gamma_t$ using Eq.~\eqref{eq:length-normalized-rkl}-\eqref{eq:online-damped-stepsize}.
\STATE Set $p_t=\frac{|\{i:y_i=-1\}|}{m}$.
\IF{$p_t>\lambda$}
\STATE $\theta^o_{t+1}=\theta^o_t-\gamma_t p_t\hat{\mathbf{g}}^o_t$.
\STATE Add the accepted preference mini-batch to the current buffer: $\ACal \leftarrow \ACal \cup \{S_t\}$.
\ELSE
\STATE $\theta^o_{t+1}=\theta^o_t$.
\ENDIF
\ENDFOR
\STATE \textbf{Output:} $\theta_{T+1}=[\bar{\theta};\theta^o_{T+1}]$.
\end{algorithmic}
\end{algorithm}

\paragraph{Practical scheme.} While the basic scheme evaluates reverse KL at the candidate policy, the practical scheme samples from the current policy and uses a length-normalized statistic to damp the update. For independent prompts $\tilde{\x}_j \sim P_\textnormal{on}$ and responses $\tilde{\y}_j \sim \pi_{\theta_t}(\cdot \mid \tilde{\x}_j)$,
the sequence-level estimator
\begin{equation}\label{eq:seq-rkl-estimator}
\widehat{D}_t^\textnormal{seq} = \tfrac{1}{B}\sum_{j=1}^B
\log\left(\tfrac{\pi_{\theta_t}(\tilde{\y}_j \mid \tilde{\x}_j)}{\pi_\textnormal{ref}(\tilde{\y}_j \mid \tilde{\x}_j)}\right)
\end{equation}
is unbiased for $D_\textnormal{RKL}(\pi_{\theta_t}\|\pi_\textnormal{ref})$. In practice, we use
\begin{equation}\label{eq:length-normalized-rkl}
\hat{d}_t = \tfrac{1}{B}\sum_{j=1}^B \tfrac{\log\pi_{\theta_t}(\tilde{\y}_j \mid \tilde{\x}_j)-\log\pi_\textnormal{ref}(\tilde{\y}_j \mid \tilde{\x}_j)}{\max\{1,|\tilde{\y}_j|\}}.
\end{equation}
Length normalization changes the population quantity being estimated. In particular, $\hat d_t$ is a signed statistic and its population counterpart needs not be nonnegative. We set
\begin{equation}\label{eq:online-damped-stepsize}
\gamma_t = \tfrac{\gamma}{1+\rho\max\{\hat{d}_t-\tau_p,0\}},
\end{equation}
where $\tau_p$ is the threshold for the length-normalized statistic. As such, the online samples only change the step size, not the comparison oracle or the preference labels.

We divide training into consecutive blocks of $n$ iterations. At the start of each block after the first, the replay buffer is replaced by the mini-batches that passed the update gate $p_t>\lambda$ in the preceding completed block. At each iteration, with probability
$\alpha$, we sample uniformly from this buffer when it is nonempty; otherwise, we sample a new mini-batch from $D_\textnormal{noisy}$. Revisited mini-batches use fresh perturbations around the current parameters rather than reusing previous update directions. Here, ``successful" means only that the comparison gate was passed. Section~\ref{sec:online} evaluates the empirical effect of combining replay with online damping.

Algorithm~\ref{alg:online-practical} is motivated by the principle used in Algorithm~\ref{alg:basic-online}, but cannot be covered by Theorem~\ref{thm:online}. In particular, the length-normalized quantity in Eq.~\eqref{eq:length-normalized-rkl} is not the sequence-level reverse KL in Eq.~\eqref{def:rkl}, and Eq.~\eqref{eq:online-damped-stepsize} does not enforce the hard constraint $\pi\in\Pi_\tau$. These are practical heuristics whose effect is evaluated empirically in Section~\ref{sec:online}.

\section{Experiments}\label{sec:exp}
We investigate the effectiveness of ComPO on aligning the LLMs. First, we evaluate offline scheme as an augmentation to DPO and its variants, where it extracts the directions from noisy preference pairs. Second, we study the offline design choices and the scaling behavior with respect to perturbations, perturbed layers and noisy pairs. Third, we evaluate online scheme, which uses unlabeled current-policy generations to damp the step through a reverse-KL proxy. Unless otherwise stated, the main tables report point estimates from the reported runs and the ablation tables explicitly report variation across repeated runs.
\setlength{\tabcolsep}{2pt}
\begin{table}[!t]
\caption{\footnotesize{Evaluation on AlpacaEval~2, Arena-Hard, and MT-Bench across four model configurations. LC and WR denote length-controlled win rate and raw win rate, respectively. Turn-1 and Turn-2 are the MT-Bench scores for the initial and follow-up questions. ``PA'' denotes the pre-alignment supervised or instruction-fine-tuned checkpoint before DPO training.}}
\centering
\resizebox{\textwidth}{!}{
\begin{tabular}{lcccccccccccc}
\toprule
\multirow{3}{*}{\textbf{Method}} & \multicolumn{6}{c}{\textbf{Mistral-7B-Base}} & \multicolumn{6}{c}{\textbf{Mistral-7B-Instruct}} \\ 
\cmidrule(lr){2-7} 
\cmidrule(lr){8-13} 
& \multicolumn{2}{c}{\textbf{AlpacaEval 2}} & \multicolumn{1}{c}{\textbf{Arena-Hard}} & \multicolumn{3}{c}{\textbf{MT-Bench}} & \multicolumn{2}{c}{\textbf{AlpacaEval 2}} & \multicolumn{1}{c}{\textbf{Arena-Hard}} & \multicolumn{3}{c}{\textbf{MT-Bench}} \\ 
\cmidrule(lr){2-3} 
\cmidrule(lr){4-4}
\cmidrule(lr){5-7} 
\cmidrule(lr){8-9}
\cmidrule(lr){10-10}
\cmidrule(lr){11-13} 
& {\footnotesize \bf LC (\%)} & {\footnotesize \bf WR (\%)} & {\footnotesize \bf WR (\%)} & {\footnotesize \bf Turn-1} & {\footnotesize \bf Turn-2} & {\footnotesize \bf Avg.} & {\footnotesize \bf LC (\%)} & {\footnotesize\bf WR (\%)} & {\footnotesize \bf WR (\%)} & {\footnotesize \bf Turn-1} & {\footnotesize \bf Turn-2} & {\footnotesize \bf Avg.} \\ \midrule
PA & 7.33 & 4.48 & 1.1 & 6.10 & 5.04 & 5.57  & 16.54 & 12.43 & 10.9 & 6.19 & 5.10 & 5.65 \\
DPO &  9.71 & 6.27 & 2.9 & 6.20 & \textbf{5.38} & \textbf{5.79} & 24.14 & 16.71 & \textbf{14.4} & 6.28 & 5.42 & 5.86 \\
DPO$_{\textnormal{clean}}$ & 9.41 & 6.52 & 3.0 & 6.18 & 5.22 & 5.70  & 23.89 & 16.15 & 14.2 & 6.11 & 5.34 & 5.73 \\ \midrule
DPO$_{\textnormal{clean}}$+ComPO & \textbf{11.66} & \textbf{6.55} & \textbf{3.2} & \textbf{6.22} & 5.32 & 5.77 & \textbf{26.17} & \textbf{18.32} & 10.5 & \textbf{7.78} & \textbf{7.63} & \textbf{7.69} \\

\midrule[.7pt]
\multirow{3}{*}{\textbf{Method}} & \multicolumn{6}{c}{\textbf{Llama-3-8B-Base}} & \multicolumn{6}{c}{\textbf{Llama-3-8B-Instruct}} \\ 
\cmidrule(lr){2-7}
\cmidrule(lr){8-13}
& \multicolumn{2}{c}{\textbf{AlpacaEval 2}} & \multicolumn{1}{c}{\textbf{Arena-Hard}} & \multicolumn{3}{c}{\textbf{MT-Bench}} & \multicolumn{2}{c}{\textbf{AlpacaEval 2}} & \multicolumn{1}{c}{\textbf{Arena-Hard}} & \multicolumn{3}{c}{\textbf{MT-Bench}} \\ 
\cmidrule(lr){2-3}
\cmidrule(lr){4-4}
\cmidrule(lr){5-7}
\cmidrule(lr){8-9}
\cmidrule(lr){10-10}
\cmidrule(lr){11-13}
& {\footnotesize \bf LC (\%)} & {\footnotesize \bf WR (\%)} & {\footnotesize \bf WR (\%)} & {\footnotesize \bf Turn-1} & {\footnotesize \bf Turn-2} & {\footnotesize \bf Avg.} & {\footnotesize \bf LC (\%)} & {\footnotesize\bf WR (\%)} & {\footnotesize \bf WR (\%)} & {\footnotesize \bf Turn-1} & {\footnotesize \bf Turn-2} & {\footnotesize \bf Avg.} \\ \midrule
PA & 3.21 & 7.97 & 4.1 & 6.53 & 5.66 & 6.10  & 24.06 & 23.69 & 20.8 & 8.22 & 7.57 & 7.90 \\
DPO & 4.14 & 10.43 & \textbf{12.1} & 6.61 & 5.85 & 6.23 & 32.59 & 31.99 & 22.9 & 8.30 & 7.55 & 7.93 \\
DPO$_{\textnormal{clean}}$ & 4.28 & 9.81 & 12.0 & \textbf{6.64} & 6.01 & 6.33 & 32.92 & 32.42 & 22.9 & 8.26 & 7.63 & 7.94 \\ \midrule
DPO$_{\textnormal{clean}}$+ComPO & \textbf{5.39} & \textbf{10.93} & \textbf{12.1} & 6.60 & \textbf{6.28} & \textbf{6.44} & \textbf{35.79} & \textbf{35.03} & \textbf{23.1} & \textbf{8.39} & \textbf{7.71} & \textbf{8.05} \\ \bottomrule
\end{tabular}
} 
\label{tab:main}
\vspace{-1em}
\end{table}

\subsection{Offline training for augmenting DPO and SimPO}\label{sec:offline}
We identify clean and noisy preference pairs using the margin threshold $\delta_\textnormal{margin}=3$. For Mistral-7B models, we set $r=0.0005$, $m=1600$, $\lambda_g=0.00022$, and $\lambda=0.2$. For Llama-3-8B models and Gemma-2-9B-it, we set $r=0.00075$, $m=1800$, $\lambda_g=0.00008$, and $\lambda=0.2$. We use UltraFeedback\footnote{\url{https://huggingface.co/datasets/HuggingFaceH4/ultrafeedback_binarized}}~\citep{Cui-2024-Ultrafeedback} throughout the offline experiments. We initialize from the supervised fine-tuned \textbf{Base} and \textbf{Instruct} models used in~\citet{Meng-2024-SimPO}: Mistral-7B Base and Instruct\footnote{\url{https://huggingface.co/alignment-handbook/zephyr-7b-sft-full}}, Llama-3-8B Base\footnote{\url{https://huggingface.co/princeton-nlp/Llama-3-Base-8B-SFT}} and Instruct\footnote{\url{https://huggingface.co/meta-llama/Meta-Llama-3-8B-Instruct}}, and Gemma-2-9B-it\footnote{\url{https://huggingface.co/google/gemma-2-9b-it}}. All ComPO runs use 30 NVIDIA A40 GPUs, each with 46 GB of memory.

We follow the evaluation protocol of~\citet{Meng-2024-SimPO} and evaluate on AlpacaEval~2-v0.6.6~\citep{Li-2023-AlpacaEval}, Arena-Hard~\citep{Li-2024-Crowdsourced}, and MT-Bench~\citep{Zheng-2023-Judging}. For AlpacaEval~2, GPT-4 Turbo serves as both baseline and judge models. The judge compares each model response with the baseline response, and we report raw win rate (WR) and length-controlled win rate (LC)~\citep{Dubois-2024-Length}. LC adjusts judged preferences for response length and a higher LC score does not by itself establish shorter responses. For Arena-Hard, the baseline is GPT-4-0314 and the judge is GPT-4 Turbo. We report WR. For MT-Bench, GPT-4 scores multi-turn Q\&A responses on a 10-point scale. We report the scores for the initial question (Turn-1), the follow-up question (Turn-2), and their average.
\setlength{\tabcolsep}{2pt}
\begin{table*}[!t] \small
\centering 
\caption{\footnotesize{Pairwise log-likelihoods in three independent trials for $\gamma \in \{0.1, 1\}$, with all other hyperparameters fixed at their default values. Each cell reports $(\log\pi_\theta(\y^+ | \x), \log\pi_\theta(\y^- | \x))$ after one training run; the initial values appear in the model headers. The trials use independently sampled perturbations $\{\z_i\}_{1 \leq i \leq m}$. Across the reported trials, the preferred-response log-likelihood is nondecreasing and the dispreferred-response log-likelihood is nonincreasing.}}
\label{table:likelihood_displacement} \vspace*{-.5em}
\centering
\resizebox{0.67\linewidth}{!}{%
\begin{tabular}{@{}cccc@{}}
\toprule
\multicolumn{4}{c}{\textbf{Llama-3-Instruct-8B} \ $(\log \pi_\theta(\y^+ | \x), \log\pi_\theta(\y^- | \x))=(-46.761, -47.410)$} \\ \midrule
$\gamma$ & Trial 1 & Trial 2 & Trial 3 \\ \midrule
0.1 & $(-46.744,\,-47.411)$ & $(-46.760,\,-47.411)$ & $(-46.759,\,-47.410)$ \\
1 & $(-46.728,\,-47.520)$ & $(-46.743,\,-47.525)$ & $(-46.753,\,-47.517)$ \\
\addlinespace[0.1em] \midrule
\multicolumn{4}{c}{\textbf{Gemma-2-9B-it} \ $(\log\pi_\theta(\y^+ | \x), \log\pi_\theta(\y^- | \x))=(-133.122, -134.557)$} \\ \midrule
$\gamma$ & Trial 1 & Trial 2 & Trial 3 \\ \midrule
0.1 & $(-133.122,\,-134.557)$ & $(-133.122,\,-134.557)$ & $(-133.121,\,-134.557)$ \\
1 & $(-133.059,\,-134.562)$ & $(-133.122,\,-134.564)$ & $(-133.112,\,-134.565)$ \\ \bottomrule
\end{tabular}
}
\vspace{-1em}
\end{table*}

\paragraph{DPO with ComPO.} We split the data into clean and noisy subsets using the margin criterion in Eq.~\eqref{eq:noisy-margin}. Starting from the SFT model, we train on all pairs to obtain DPO and on only the clean pairs to obtain DPO$_{\textnormal{clean}}$. Following~\citet{Meng-2024-SimPO}, both models are trained for one epoch. We initialize ComPO from DPO$_{\textnormal{clean}}$ and run it for one epoch with 100 iterations over noisy pairs, yielding DPO$_{\textnormal{clean}}$+ComPO.

We summarize the results in Table~\ref{tab:main} and report three key observations. First, filtering low-margin pairs alone does not uniformly improve DPO: DPO$_{\textnormal{clean}}$ is comparable to DPO overall and performs better only for some initializations, such as Llama-3-Instruct-8B. The log-likelihood margin therefore appears to be an imperfect proxy for pair ambiguity; richer criteria such as the CHES score~\citep{Razin-2025-Unintentional} may separate pairs more accurately. Nevertheless, the margin is inexpensive to compute, and ComPO extracts useful information from the pairs that it filters out. Second, gains are especially consistent in AlpacaEval~2 LC, indicating improved judged performance after adjustment for response length. We interpret these scores separately from the response-length measurements reported below. Third, ComPO uses only the first 100 noisy pairs, yet improves most model-benchmark combinations. As such, a small set of low-margin pairs can contain useful alignment information when processed through comparison oracles.

The main exception is Arena-Hard for Mistral-7B-Instruct, where DPO scores $14.4$ and DPO$_{\textnormal{clean}}$+ComPO scores $10.5$; for the two Llama configurations, the scores are tied or nearly tied. An explanation is that Arena-Hard reports raw rather than length-controlled win rate and can therefore favor longer generations~\citep{Meng-2024-SimPO}. For Mistral-7B-Instruct, the average response length is $513$ for DPO and $468$ for DPO$_{\textnormal{clean}}$+ComPO. This difference is consistent with the lower Arena-Hard score and the stronger AlpacaEval~2 LC score, although it does not by itself establish causality.

We also inspect whether the comparison oracle moves the likelihoods of each noisy pair in the intended direction. In Table~\ref{table:likelihood_displacement}, we summarize three independent trials for $\gamma \in \{0.1,1\}$ on Llama-3-Instruct-8B and Gemma-2-9B-it. For example, with Llama-3-Instruct-8B and $\gamma=1$, the first trial changes the pair from $(-46.761,-47.410)$ to $(-46.728,-47.520)$: the preferred response becomes more likely, while the dispreferred response becomes less likely. Thus, for the two reported models, the oracle-based update moves the pairwise likelihoods in the desired direction or leaves them unchanged. This diagnostic is an in-training sanity check rather than a population-level performance guarantee.

The thresholds $\lambda_g$ and $\lambda$ limit the coordinates and iterations on which the practical scheme updates the model. Very large step sizes can still destabilize the practical scheme, while Theorem~\ref{thm:offline} analyzes the step size only for the basic
scheme. Section~\ref{sec:online} considers adaptive step-size control based on current-policy generations.

\paragraph{SimPO with ComPO.} ComPO is not tied to DPO. We apply it directly to existing, well-tuned SimPO checkpoints~\citep{Meng-2024-SimPO} and use the training and evaluation configuration described at the beginning of Section~\ref{sec:offline}. Table~\ref{tab:main_simpo} shows that SimPO+ComPO improves both AlpacaEval~2 metrics for all three models. On Arena-Hard, it improves Mistral-7B-Instruct and Llama-3-8B-Instruct and matches Gemma-2-9B-it. The MT-Bench average also increases slightly for each model. These results show that ComPO augments other direct alignment methods without changing its original training objective.

\begin{table}[!t] \small
\caption{\footnotesize{Applying ComPO to existing SimPO checkpoints across models and benchmarks.}}
\centering
\resizebox{0.85\textwidth}{!}{
\begin{tabular}{c|ccccccc} \toprule
\multirow{2}{*}{\textbf{Model}} & \multirow{2}{*}{\textbf{Method}} & \multicolumn{2}{c}{\textbf{AlpacaEval 2}} & \multicolumn{1}{c}{\textbf{Arena-Hard}} & \multicolumn{3}{c}{\textbf{MT-Bench}} \\
\cmidrule(lr){3-4}\cmidrule(lr){5-5}\cmidrule(lr){6-8}
& & {\scriptsize \bf LC (\%)} & {\scriptsize \bf WR (\%)} & {\scriptsize \bf WR (\%)} & {\scriptsize \bf Turn-1} & {\scriptsize \bf Turn-2} & {\scriptsize \bf Avg.} \\ \midrule
\multirow{2}{*}{\textbf{Mistral-7B-Instruct}} & SimPO & 40.22 & 41.18 & 20.8 & \textbf{7.94} & 7.31 & 7.62 \\
& SimPO + ComPO & \textbf{42.27} & \textbf{43.17} & \textbf{22.0} & 7.83 & \textbf{7.46} & \textbf{7.64} \\ \midrule
\multirow{2}{*}{\textbf{Llama-3-8B-Instruct}} & SimPO & 48.71 & 43.66 & 36.3 & 7.91 & 7.42 & 7.66 \\
& SimPO + ComPO & \textbf{49.53} & \textbf{45.03} & \textbf{37.3} & \textbf{7.94} & \textbf{7.45} & \textbf{7.70} \\ \midrule
\multirow{2}{*}{\textbf{Gemma-2-9B-it}} & SimPO & 60.36 & 55.59 & \textbf{61.1} & \textbf{9.07} & 8.47 & 8.77 \\
& SimPO + ComPO & \textbf{62.42} & \textbf{57.20} & \textbf{61.1} & 8.99 & \textbf{8.58} & \textbf{8.79} \\ \bottomrule
\end{tabular}
}
\label{tab:main_simpo}
\vspace{-1em}
\end{table}

\subsection{Ablation studies}\label{sec:abalation}
\paragraph{Number of perturbations.} The number of perturbations controls how many directions the comparison oracle evaluates. We vary $m$ while holding the remaining hyperparameters fixed and use Mistral-7B-Instruct for this study. As $m$ increases from $800$ to $5400$, the mean WR and LC improve, with diminishing gains at larger $m$ (Table~\ref{tab:perturb_m}). This is consistent with a more accurate gradient estimate from additional perturbations, although the computation time increases. Peak memory remains unchanged because ComPO accumulates a running average rather than storing all perturbation vectors (see Line~5 of Algorithm~\ref{alg:offline-practical}).

We also investigate whether ComPO scales beyond output-layer perturbations. Keeping all other settings fixed, we perturb the MLPs in layers 30--31 together with the output layer of Mistral-7B-Instruct. Table~\ref{tab:multilayer_perturb} uses GPT-4.1 as the Arena-Hard judge, and perturbing three layers improves all three reported metrics. The larger search space has a modest systems cost in this setup: peak GPU memory increases from $16.3$ GB to $16.7$ GB, and $600$ perturbations take $60$ seconds rather than $50$ seconds.

\paragraph{Gradient threshold and number of noisy pairs.} ComPO uses the entry threshold $\lambda_g$ to update only gradient entries with sufficiently large magnitude. We vary $\lambda_g$ with $m=3300$ on Mistral-7B-Instruct (Table~\ref{tab:lambda_g}). The strongest results occur when approximately $1\%$--$6\%$ of the entries are retained. Retaining many small entries or filtering almost all entries leads to lower performance. We then increase the number of noisy pairs from $100$ to $300$. Table~\ref{tab:noisy_pairs_scale} shows higher mean performance on both AlpacaEval~2 metrics and Arena-Hard, indicating that ComPO continues to benefit from additional low-margin pairs.
\begin{table}[!t] 
\centering
\caption{\footnotesize{Effect of the number of perturbations $m$ on AlpacaEval~2. Entries are mean $\pm$ standard deviation over five runs, with the best run in parentheses.}} \label{tab:perturb_m}
\resizebox{0.95\columnwidth}{!}{%
\begin{tabular}{l|cccc} \toprule
\textbf{Perturbation ($m$)} & \textbf{800} & \textbf{1600} & \textbf{3300} & \textbf{5400} \\ \midrule
\textbf{AlpacaEval 2-WR} \% & $17.32 \pm 0.86\ (17.94)$ & $17.50 \pm 0.65\ (18.32)$ & $19.21 \pm 0.58\ (\textbf{20.25})$ & $19.69 \pm 0.36\ (20.07)$ \\
\textbf{AlpacaEval~2-LC} \% & $24.72 \pm 1.02\ (25.12)$ & $25.02 \pm 0.91\ (26.17)$ & $25.91 \pm 0.95\ (27.14)$ & $26.49 \pm 0.81\ (\textbf{27.20})$ \\ \bottomrule
\end{tabular}%
}
\vspace{-.5em}
\end{table}
\begin{table}[!t]
\vspace{-.5em}
\centering
\caption{\footnotesize{Effect of perturbing multiple layers. We report AlpacaEval~2 WR and LC and Arena-Hard WR. Entries are mean $\pm$ standard deviation over five runs, with the best run in parentheses.}} \label{tab:multilayer_perturb}
\resizebox{0.95\columnwidth}{!}{%
\begin{tabular}{l|ccc}
\toprule
\textbf{Layers perturbed (\# params)} & \textbf{AlpacaEval~2-WR \%} & \textbf{AlpacaEval~2-LC \%} & \textbf{Arena-Hard (GPT-4.1)-WR \%} \\ \midrule
1 (0.13B) & $17.50 \pm 0.65\ (18.32)$ & $25.02 \pm 0.91\ (26.17)$ & $10.80 \pm 0.21\ (11.0)$ \\
3 (0.25B) & $18.19 \pm 0.81\ (19.38)$ & $26.00 \pm 0.89\ (27.09)$ & $11.26 \pm 0.36\ (11.7)$ \\ \bottomrule
\end{tabular}%
}
\vspace{-1em}
\end{table}
\begin{table}[!t]
\centering
\caption{\footnotesize{Effect of the gradient-entry threshold $\lambda_g$ on AlpacaEval~2. Entries are mean $\pm$ standard deviation over five runs, with the best run in parentheses.}} \label{tab:lambda_g}
\resizebox{\columnwidth}{!}{%
\begin{tabular}{l|ccccc} \toprule
\textbf{\textbf{$\lambda_g$}} & \textbf{$0$} & \textbf{$4{\times}10^{-5}$} & \textbf{$1.8{\times}10^{-4}$} & \textbf{$2.2{\times}10^{-4}$} & \textbf{$2.5{\times}10^{-4}$} \\ \midrule
\textbf{Percentage of gradient entries updated} & $100\%$ & $63\%$ & $6\%$ & $1\%$ & $0.15\%$ \\
\textbf{AlpacaEval 2-WR \%} & $15.72 \pm 0.77\ (16.34)$ & $16.02 \pm 0.69\ (16.69)$ & $19.02 \pm 0.62\ (20.15)$ & $19.21 \pm 0.58\ (20.25)$ & $16.10 \pm 0.11\ (16.21)$ \\
\textbf{AlpacaEval 2-LC \% }& $23.42 \pm 1.03\ (24.28)$ & $24.01 \pm 0.91\ (25.10)$ & $26.06 \pm 0.81\ (27.27)$ & $25.91 \pm 0.95\ (27.14)$ & $23.82 \pm 0.23\ (24.00)$ \\ \bottomrule
\end{tabular}%
}
\vspace{-1em}
\end{table}
\begin{table}[!t]
\centering
\caption{\footnotesize{Effect of increasing the number of noisy preference pairs used by ComPO. Entries are mean $\pm$ standard deviation, with the best run in parentheses.}} \label{tab:noisy_pairs_scale}
\resizebox{0.95\columnwidth}{!}{%
\centering
\begin{tabular}{l|ccc} \toprule
\textbf{Number of noisy pairs} & \textbf{AlpacaEval 2-WR \%} & \textbf{AlpacaEval 2-LC \%} & \textbf{Arena-Hard (GPT-4.1)-WR \%} \\ \midrule
100 & $19.21 \pm 0.58\ (20.25)$ & $25.91 \pm 0.95\ (27.14)$ & $11.02 \pm 0.13\ (11.2)$ \\
300 & $20.07 \pm 0.99\ (21.35)$ & $26.28 \pm 0.81\ (27.59)$ & $11.76 \pm 0.30\ (12.1)$ \\ \bottomrule
\end{tabular}%
} \vspace*{-1em}
\end{table}

\begin{table}[!t]
\centering
\caption{\footnotesize{Applying ComPO directly to DPO checkpoints without training DPO only on the clean subset. AE, AH, and MT denote AlpacaEval~2, Arena-Hard, and MT-Bench, respectively.}} \label{tab:mistral_main}
\resizebox{\linewidth}{!}{%
\begin{tabular}{lcccccc} \toprule
\textbf{Method} & \textbf{AE LC (\%)} & \textbf{AE WR (\%)} & \textbf{AH (GPT-4.1) WR (\%)} & \textbf{MT Turn 1} & \textbf{MT Turn 2} & \textbf{MT Avg} \\ \midrule
DPO & 24.14 & 16.71 & 10.40 & 6.28 & 5.42 & 5.86 \\
DPO + ComPO & 27.03 & 20.85 & 11.40 & 7.80 & 7.61 & 7.71 \\ \midrule
DPO (clean) & 23.89 & 16.15 & 10.50 & 6.11 & 5.34 & 5.73 \\
DPO (clean) + ComPO    & 27.14 & 20.25 & 11.20 & 7.82 & 7.59 & 7.71 \\ \bottomrule
\end{tabular}%
} \vspace*{-.5em}
\end{table}

\paragraph{Efficiency and compatibility.} Full fine-tuning and LoRA-based fine-tuning~\citep{Hu-2022-Lora} are common post-training choices. ComPO instead uses a lightweight update that changes only selected entries in the output layer. Figure~\ref{fig:efficiency} (left) shows that the chosen $\lambda_g$ retains about $1\%$ of the output-layer entries for Mistral-7B and Llama-3-8B. For Mistral-7B, the plotted $0.13$B output-layer size and $1.18\%$ retention rate correspond to roughly $1.5$ million updated parameters, or about $0.02\%$ of the full 7B model. Except in the multi-layer ablation, parameters outside the output layer remain frozen.

The comparison-based update avoids full-model backpropagation and accumulate signed perturbations without storing all perturbation vectors. Figure~\ref{fig:efficiency} (middle) reports a peak of approximately $23$ GB per A40 GPU for Llama-3-8B ComPO; the corresponding reported peaks for DPO and SimPO are $77$ GB and $69$ GB on H100 GPUs. Because these measurements use different hardware, they describe practical resource requirements rather than a controlled head-to-head comparison. ComPO also parallelizes naturally. For $600$ perturbations on $30$ A40 GPUs, each worker processes $20$ perturbations, and the master aggregates the oracle outputs and perturbation signals to form the gradient estimate (Algorithm~\ref{alg:offline-practical}). Figure~\ref{fig:efficiency} (right) shows that runtime increases approximately linearly with the perturbed parameter dimension across the three tested models. Except for the multi-layer ablation, perturbations are restricted
to the complete \texttt{lm\_head} layer.
\begin{figure*}[!t]
\centering
\includegraphics[width=\textwidth]{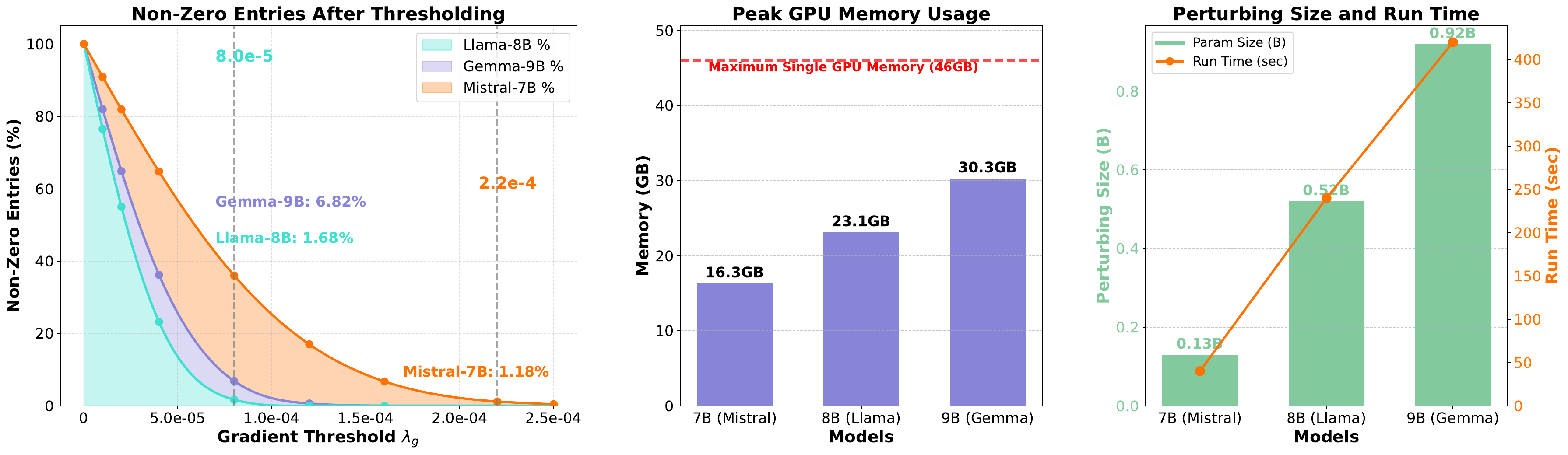}
\vspace{-2em}
\caption{\footnotesize{(Left) Percentage of nonzero entries in the final gradient as the gradient-entry threshold $\lambda_g$ varies. (Middle) Peak GPU memory used by ComPO for the three model families. (Right) Perturbed output-layer size and wall-clock time for completing $600$ perturbations on $30$ NVIDIA A40 GPUs.}}
\label{fig:efficiency}
\vspace{-1em}
\end{figure*}
\begin{wrapfigure}{r}{0.5\textwidth}
\centering 
\vspace{-1em}
\includegraphics[width=\linewidth]{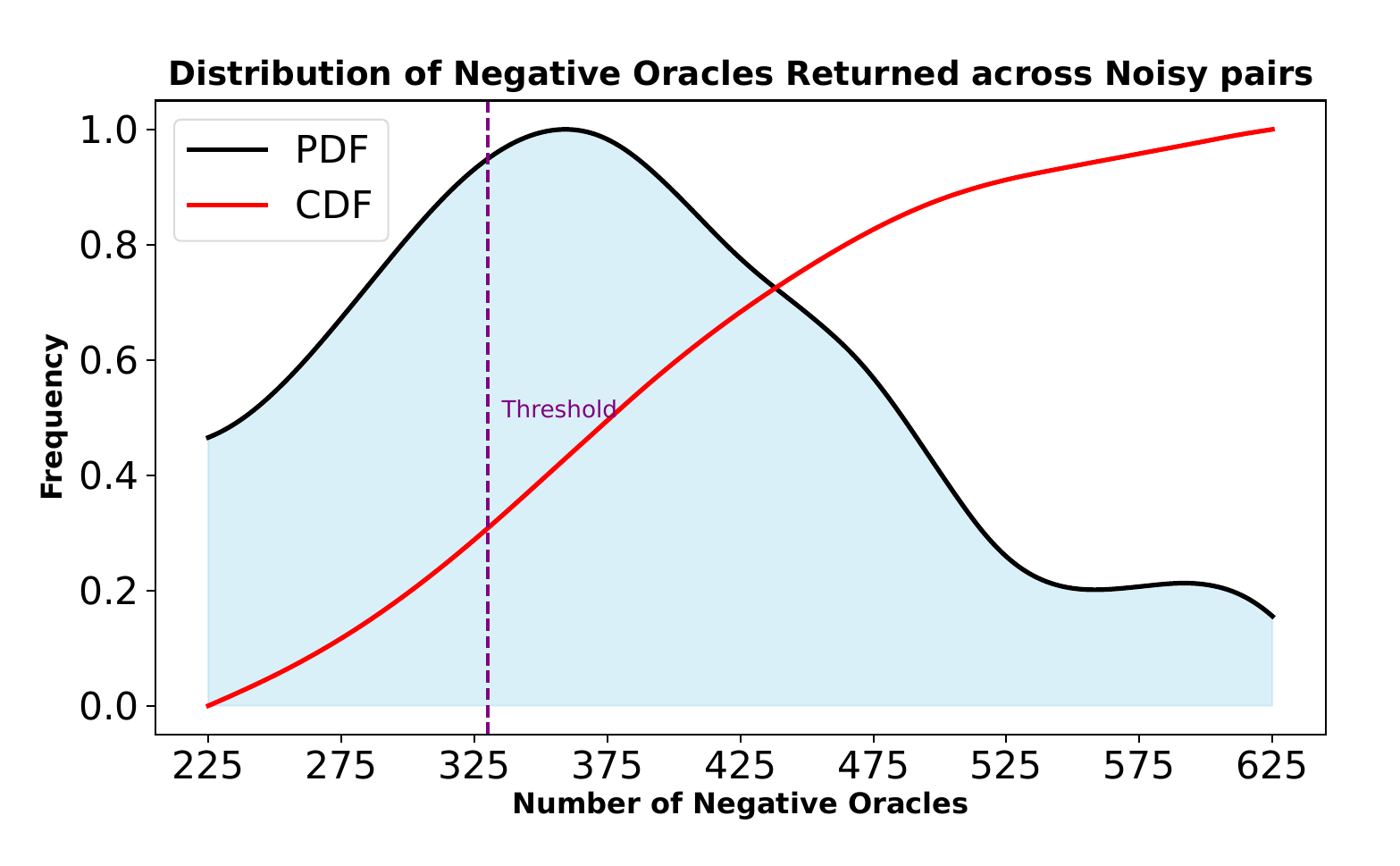}
\vspace{-2em}
\caption{\footnotesize{Empirical and cumulative distributions of the number of negative oracle outputs across noisy pairs. The dashed line marks the threshold used for Mistral-7B-Base.}}
\label{fig:perturbation}
\end{wrapfigure}
ComPO can also be applied directly to an existing checkpoint without first training the underlying DPO model only on clean pairs. In Table~\ref{tab:mistral_main}, we start from DPO checkpoints trained on the full preference dataset and then apply ComPO with $m=3300$. The resulting gains are comparable to those obtained from DPO$_{\textnormal{clean}}$+ComPO. This supports a practical workflow in which a user starts from a publicly available aligned model and refines it with task-specific, potentially noisy preference data using sparse output-layer updates and modest GPU memory.

\begin{table}[!t]
\centering
\caption{\footnotesize{Mean $\pm$ standard deviation of the number of negative oracle outputs for the first ten noisy pairs across eight consecutive runs.}} \label{tab:perturbation}
\resizebox{0.65\columnwidth}{!}{%
\begin{tabular}{ccccc} \toprule
\textbf{Pair 1} & \textbf{Pair 2} & \textbf{Pair 3} & \textbf{Pair 4} & \textbf{Pair 5} \\ \midrule
$394.25 \pm 28.30$ & $364.50 \pm 14.21$ & $369.00 \pm 20.39$ & $447.00 \pm 19.87$ & $591.00 \pm 13.46$ \\ \midrule
\textbf{Pair 6} & \textbf{Pair 7} & \textbf{Pair 8} & \textbf{Pair 9} & \textbf{Pair 10} \\ \midrule
$282.00 \pm 14.98$ & $459.25 \pm 10.66$ & $242.13 \pm 15.29$ & $311.13 \pm 15.87$ & $348.75 \pm 18.59$ \\ \bottomrule
\end{tabular}%
} \vspace*{-1em}
\end{table}

\paragraph{Successful perturbations and clipping threshold $\lambda$.} In addition to the entry-level threshold $\lambda_g$, ComPO uses the clipping threshold $\lambda>0$ to discard an update when too few perturbations return successful comparison-oracle signals. Figure~\ref{fig:perturbation} shows the empirical distribution of the number of negative oracle outputs $k=|\{i:y_i=-1\}|$ across noisy pairs for Mistral-7B-Base. The threshold removes the low-count tail by skipping updates with a small fraction of favorable perturbations. Table~\ref{tab:perturbation} further shows that this count remains in a similar range for a fixed pair across eight independent runs. Together, these results indicate that the amount of usable oracle feedback is reproducible and that clipping avoids poorly supported updates.

\subsection{Online training}\label{sec:online}
We evaluate online ComPO in Algorithm~\ref{alg:online-practical}. It keeps the offline comparison direction and uses unlabeled samples to compute the length-normalized statistic in Eq.~\eqref{eq:length-normalized-rkl}. This statistic adjusts the step size through the soft-damping rule in Eq.~\eqref{eq:online-damped-stepsize}. The implementation is a heuristic approximation to the basic scheme in Algorithm~\ref{alg:basic-online}. Indeed, it does not evaluate the proposed next policy or enforce the hard sequence-level reverse-KL constraint analyzed in Theorem~\ref{thm:online}. For the replay buffer, we set the window length to $n=50$.

For Qwen3-4B-Base, we use $r=0.0008$, $m=1800$, and $\lambda_g=0.000085$. For Gemma-3-4B-it, we use $r=0.00045$, $m=1800$, and $\lambda_g=0.000075$. We evaluate Qwen3-4B-Base\footnote{\url{https://huggingface.co/Qwen/Qwen3-4B-Base}}, Llama-3.2-3B-Instruct\footnote{\url{https://huggingface.co/meta-llama/Llama-3.2-3B-Instruct}}, and Gemma-3-4B-it\footnote{\url{https://huggingface.co/google/gemma-3-4b-it}} using the GPT-4.1 configurations of AlpacaEval~2 and Arena-Hard. Unless stated otherwise, the remaining training settings follow the offline protocol in Section~\ref{sec:offline}.

In Table~\ref{tab:online}, we compare offline ComPO, ComPO with online damping, and ComPO with both damping and replay. Relative to offline ComPO, damping improves AlpacaEval~2 LC,
AlpacaEval~2 WR, and Arena-Hard WR by $1.23$, $1.47$, and $0.6$ percentage points for Qwen3-4B-Base; $0.35$, $0.73$, and $0.5$ points for Llama-3.2-3B-Instruct; and $2.07$, $1.83$, and $5.6$ points for Gemma-3-4B-it. Adding replay improves all three reported metrics for each model. These comparisons support the empirical benefit of the combined procedure in the tested configurations, without identifying a separate variance-reduction mechanism.

\begin{table}[!t]
\caption{\footnotesize{Evaluation on the GPT-4.1 configurations of AlpacaEval~2 and Arena-Hard. LC and WR denote length-controlled and raw win rates. PA denotes the pre-alignment supervised or instruction-fine-tuned checkpoint. ``+RKL" uses the length-normalized damping rule in Algorithm~\ref{alg:online-practical} and the ``+resampling" adds replay to that same online variant.}} \vspace*{-0.5em}
\centering
\resizebox{\textwidth}{!}{
\begin{tabular}{lccc ccc ccc}
\toprule
\multirow{3}{*}{\textbf{Method}} & \multicolumn{3}{c}{\textbf{Qwen3-4B-Base}} & \multicolumn{3}{c}{\textbf{Llama-3.2-3B-Instruct}} & \multicolumn{3}{c}{\textbf{Gemma-3-4B-it}} \\
\cmidrule(lr){2-4}
\cmidrule(lr){5-7}
\cmidrule(lr){8-10}
& \multicolumn{2}{c}{\textbf{AlpacaEval 2}} & \multicolumn{1}{c}{\textbf{Arena-Hard}} & \multicolumn{2}{c}{\textbf{AlpacaEval 2}} & \multicolumn{1}{c}{\textbf{Arena-Hard}} & \multicolumn{2}{c}{\textbf{AlpacaEval 2}} & \multicolumn{1}{c}{\textbf{Arena-Hard}} \\
\cmidrule(lr){2-3}
\cmidrule(lr){4-4}
\cmidrule(lr){5-6}
\cmidrule(lr){7-7}
\cmidrule(lr){8-9}
\cmidrule(lr){10-10}
& {\footnotesize \bf LC (\%)} & {\footnotesize \bf WR (\%)} & {\footnotesize \bf WR (\%)} & {\footnotesize \bf LC (\%)} & {\footnotesize \bf WR (\%)} & {\footnotesize \bf WR (\%)} & {\footnotesize \bf LC (\%)} & {\footnotesize \bf WR (\%)} & {\footnotesize \bf WR (\%)} \\
\midrule
PA & 12.70 & 13.12 & 16.2 & 11.16 & 11.83 & 9.8 & 34.54 & 56.20 & 54.8 \\
DPO & 15.28 & 15.54 & 29.3 & 11.72 & 12.08 & 11.6 & 38.30 & 57.87 & 56.9 \\
DPO$+$ComPO & 16.20 & 16.27 & 30.8 & 12.35 & 12.50 & 11.9 & 40.00 & 58.57 & 57.7 \\
\midrule
DPO$+$ComPO (online) &  &  &  &  &  &  &  &  &  \\
\quad + RKL & 17.43 & 17.74 & 31.4 & 12.70 & 13.23 & 12.4 & 42.07 & 60.40 & 63.3 \\
\quad + resampling & 18.57 & 17.95 & 32.6 & 13.05 & 13.85 & 12.8 & 42.55 & 60.93  & 63.7 \\
\bottomrule
\end{tabular}
}
\label{tab:online}
\vspace{-1em}
\end{table}

\section{Conclusion}\label{sec:conclu}
We propose a new zeroth-order preference alignment method based on comparison oracles and show that it can improve large language models (LLMs) using noisy preference pairs for which the reference policy assigns similar likelihoods to preferred and dispreferred responses. The key idea is to use such pairs as comparison signals rather than directly optimizing a preference loss on them. Experimental results on multiple models and benchmarks show that ComPO improves existing direct alignment methods, with pair-level diagnostics providing evidence consistent with mitigating likelihood displacement. These results highlight the importance of designing specialized methods for preference pairs with small likelihood margins, complementing the recent findings of~\citet{Razin-2025-Unintentional}.

The extension in this journal version is online ComPO, where offline noisy preference pairs continue to determine the comparison direction, and unlabeled generations from the current policy provide reverse-KL regularization. We establish feasibility and a coverage-based
performance bound for the basic constrained scheme and evaluate damping and replay in the practical implementation. Future directions include extending our approach to other settings~\citep{Yuan-2024-Self, Xu-2024-DPO, Tajwar-2024-Preference, Guo-2024-Direct, Chen-2026-Two} and applying it to other tasks, including reasoning~\citep{Pang-2024-Iterative, Chen-2025-Stepwise} and diffusion model alignment~\citep{Wallace-2024-Diffusion}.

\section*{Acknowledgement}
We sincerely appreciate Buzz High Performance Computing (\hyperlink{https://www.buzzhpc.ai}{\texttt{https://www.buzzhpc.ai}}, \texttt{info@buzzhpc.ai}) for providing computational resources and support for this work. Tianyi Lin gratefully acknowledges financial support through a start-up grant and an early career scholarship support grant at Columbia University. 
\newpage
\bibliography{ref}

\newpage
\appendix
\section{Further Related Work} \label{app:additional}
We make additional comments on other topics, including preference learning methods, the analysis of preference learning methods, zeroth-order optimization methods, likelihood displacement, and learning from noisy preference data. For an overview of preference learning methods and open problems in RLHF, we refer to the recent survey~\citep{Casper-2023-Open}. 

\paragraph{More discussion on preference learning methods.} The lack of explicit reward models in DPO~\citep{Rafailov-2023-Direct} is known to make its performance depend strongly on the size and quality of offline preference pairs. To address this limitation, subsequent works proposed to augment preference data using a trained SFT policy~\citep{Zhao-2023-Slic} or a refined SFT policy with rejection sampling~\citep{Liu-2024-Statistical}. The DPO loss was also extended to a token-level MDP~\citep{Rafailov-2024-From}, where the transition is deterministic, i.e., the next state is determined once the current state and action are chosen, which naturally covers the fine-tuning of autoregressive LLMs. \citet{Azar-2024-General} further generalized DPO to a wider class of RL problems without explicitly introducing a reward function. Instead of maximizing a reward in a KL-constrained problem, they proposed to optimize a general non-decreasing function of the ground-truth population-level preference probability. There are also several other DPO variants~\citep{Ethayarajh-2024-Model, Park-2024-Disentangling, Xu-2024-Contrastive, Meng-2024-SimPO, Chen-2025-MallowsPO, Zhao-2025-RainbowPO}. For example,
\citet{Ethayarajh-2024-Model} aligned the policy with preferences using a prospect-theoretic loss, \citet{Tang-2024-Generalized} optimized a general loss instead of the log-likelihood loss, and \citet{Meng-2024-SimPO} aligned the reward function in the preference optimization objective with the generation metric.~\citet{Dong-2024-RLHF} and \citet{Xiong-2024-Iterative} proposed to generate human feedback in an online fashion to mitigate distribution shift and over-optimization. There has also been an attempt to understand the theoretical performance of DPO~\citep{Azar-2024-General}, although this analysis mainly focuses on the population-level objective rather than finite-sample policy-optimality or sample-complexity guarantees. \citet{Chen-2026-Reward} also extends DPO-style direct alignment to multiple-objective setup via a novel conflict-averse formulation.

\paragraph{Analysis of preference learning methods.} In this context, \citet{Zhu-2023-Principled} formulated RLHF as a contextual bandit problem and proved the convergence of the maximum likelihood estimator.~\citet{Xiong-2024-Iterative} showed the benefits of KL regularization for the sample complexity of online exploration in DPO. \citet{Xie-2025-Exploratory} studied online exploration using KL-regularized Markov decision processes and proved a sample-complexity guarantee for an exploration bonus. \citet{Liu-2024-Provably} investigated the issue of over-optimization and proved finite-sample guarantees. \citet{Song-2024-Importance} conducted a rigorous analysis through the lens of dataset
coverage to differentiate offline DPO and online RLHF. Recently, several works have reported faster convergence rates for online reward maximization in RL by exploiting the structure induced by KL regularization. For example, \citet{Shi-2025-Crucial} studied the tabular softmax parametrization setting and established quadratic convergence results.

\paragraph{Zeroth-order optimization methods.} The idea of zeroth-order optimization is to approximate a gradient using either a one-point estimator~\citep{Flaxman-2005-Online} or a two-point estimator~\citep{Agarwal-2010-Optimal,Ghadimi-2013-Stochastic, Duchi-2015-Optimal, Shamir-2017-Optimal, Nesterov-2017-Random}, where the latter approach often achieves better finite-time convergence guarantees. Despite the rapid development of two-point-based gradient-free methods, much of the work focuses on convex optimization~\citep{Duchi-2015-Optimal, Shamir-2017-Optimal, Wang-2018-Stochastic}
and smooth nonconvex optimization~\citep{Nesterov-2017-Random, Ghadimi-2013-Stochastic, Lian-2016-Comprehensive, Liu-2018-Zeroth, Chen-2019-Zo, Ji-2019-Improved, Huang-2022-Accelerated}. Convergence guarantees have been obtained in both nonsmooth convex settings~\citep{Duchi-2015-Optimal, Shamir-2017-Optimal} and smooth nonconvex settings~\citep{Ghadimi-2013-Stochastic, Nesterov-2017-Random}. Additional regularity conditions, e.g., a finite-sum structure, allow variance-reduction techniques to be used~\citep{Liu-2018-Zeroth, Chen-2019-Zo, Ji-2019-Improved}, and
sharp convergence guarantees are obtained in~\citet{Huang-2022-Accelerated}. Very recently, zeroth-order optimization methods have been developed for nonsmooth nonconvex optimization with solid theoretical guarantees~\citep{Lin-2022-Gradient, Kornowski-2024-Algorithm}. In another direction, zeroth-order optimization methods were extended to the RL setting and have achieved empirical success as scalable alternatives to classic methods such as Q-learning and policy gradient methods~\citep{Salimans-2017-Evolution, Conti-2018-Improving}. This strategy has also been applied in preference-based RL~\citep{Akrour-2011-Preference, Busa-2014-Preference} and adopted for LLM fine-tuning~\citep{Malladi-2023-Fine, Zhang-2024-Revisiting}. In these settings, the loss function can be explicitly estimated or calculated and thus can be queried to construct the gradient estimator. By contrast, our method and the methods of \citet{Tang-2024-Zeroth} and \citet{Zhang-2025-Zeroth} are developed based on comparison oracles or ranking oracles, where even noisy estimates of loss-function values are not accessible. 

\paragraph{Likelihood displacement.} We provide a brief overview of proposed explanations for likelihood displacement. Indeed, several works claimed that samples with similar preferred and dispreferred responses are responsible for likelihood displacement~\citep{Pal-2024-Smaug, Tajwar-2024-Preference, Razin-2025-Unintentional}, although the similarities were measured using different metrics. Other proposed reasons include effects of the initial SFT model~\citep{Rafailov-2024-From}, the presence of multiple training samples and limited model capacity~\citep{Tajwar-2024-Preference}, and the squeezing
effect~\citep{Ren-2025-Learning}. Recently, \citet{Razin-2025-Unintentional} conducted a thorough investigation to understand the causes of likelihood displacement, and their results suggest that samples with similar preferred and dispreferred responses might contribute more than others. Regarding the implications of likelihood displacement, previous works found that DPO tends to degrade performance on math and reasoning~\citep{Pal-2024-Smaug, Pang-2024-Iterative, Meng-2024-SimPO, Yuan-2025-Advancing}. Indeed, only a few responses are correct, and likelihood displacement can have adverse effects on correct alignment.

\paragraph{Learning from noisy preference data.} ComPO addresses low-margin preference pairs selected by Eq.~\eqref{eq:noisy-margin}. This setting is related to, but
distinct from, learning with corrupted preference labels~\citep{Amini-2024-Direct, Xiao-2024-Cal}. From this perspective, ComPO is not intended as a direct replacement for existing methods, but rather as a complementary and modular component that enhances their robustness. Moreover, learning from corrupted preference data has been studied in prior works, including those leveraging reward scores through conditional DPO~\citep{Kim-2024-Margin, Zhang-2025-Reward}. Conditional DPO modifies the DPO objective by conditioning on reward scores and solves the resulting problem via gradient-based methods, and it can be combined with ComPO in a way similar to SimPO+ComPO as in our work. Apart from noisy labels, \citet{Chen-2026-Exploration} also theoretically analyze the impact false-positive and false-negative labels in online LLM RL, which is complementary to the mis-labeled preference pairs.

\section{Missing Proofs}
We present several technical lemmas and use them to prove Theorem~\ref{thm:offline} and Theorem~\ref{thm:online}.

\subsection{Technical lemmas}
For the offline analysis, we fix a nonempty set $S \subseteq D$ and impose the smoothness, gradient sparsity, and oracle compatibility (see Theorem~\ref{thm:offline}) throughout this subsection. We use $\sign(0)=+1$. At any point with $\nabla f(\theta) \neq 0$, we write
\begin{equation*}
y_i=\CCal_\pi^S(\theta,\theta+r\z_i),\quad \bar{\g}=\tfrac{\nabla f(\theta)}{\|\nabla f(\theta)\|},\quad \bar{y}_i=\sign(\z_i^\top\bar{\g}).
\end{equation*}
Thus, the oracle compatibility guarantees $y_i=\sign(f(\theta+r\z_i)-f(\theta))$. The next proposition adapts the one-bit estimation framework~\citep{Plan-2012-Robust, Cai-2022-One} to errors that might depend on the perturbation directions but are localized near directions orthogonal to the target.
\begin{proposition}\label{prop:grad-est}
Let $1 \leq s \leq d$ and let $\bar{\g} \in \br^d$ satisfy $\|\bar{\g}\|_1 \leq \sqrt{s}$ and $\|\bar{\g}\|=1$. Suppose that $(\z_i,y_i)_{i=1}^m$ are i.i.d., where $\z_i$ is uniform on the unit sphere in $\br^d$ and $y_i \in \{-1,+1\}$, and $y_i=\sign(\z_i^\top\bar{\g})$ almost surely whenever $|\z_i^\top\bar{\g}|>\frac{1}{40\sqrt{d}}$. Then, we define
\begin{equation*}
\hat{\g} \in \argmax_{\|\g\|_1 \leq \sqrt{s}, \|\g\| \leq 1} \sum_{i=1}^m y_i\z_i^\top\g.
\end{equation*}
For any $\delta \in (0,1)$, if $m \geq c_m\left(s\log\left(\tfrac{2d}{s}\right)+\log\left(\tfrac{2}{\delta}\right)\right)$ for a sufficiently large constant $c_m$, we have
\begin{equation*}
\PP\left(\|\hat{\g}-\bar{\g}\| \leq \tfrac{1}{2}\right)\geq1-\delta.
\end{equation*}
\end{proposition}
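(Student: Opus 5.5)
Our plan follows the standard Plan--Vershynin analysis of one-bit sparse recovery by linear programming, with an extra term that absorbs the corrupted labels near the equator. Let $K=\{\g:\|\g\|_1\le\sqrt{s},\|\g\|\le1\}$ and define the empirical objective $F(\g)=\frac1m\sum_i y_i\z_i^\top\g$. Its population counterpart under clean labels is $\bar F(\g)=\EE[\bar y_i\z_i^\top\g]=\EE[|\z^\top\bar\g|]\,\bar\g^\top\g=:\lambda_d\,\bar\g^\top\g$. By rotational invariance, $\lambda_d=\EE|\z_1|\asymp 1/\sqrt d$. Since $\bar\g\in K$ and $\hat\g$ maximizes $F$ over $K$, we have $F(\hat\g)\ge F(\bar\g)$. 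We will combine this with the identity $\|\hat\g-\bar\g\|^2\le 2-2\bar\g^\top\hat\g$ (valid because $\|\hat\g\|\le1$). It then suffices to show
\[
\lambda_d(1-\bar\g^\top\hat\g)\le 2\sup_{\g\in K}|F(\g)-\lambda_d\bar\g^\top\g|\le \lambda_d/8,
\]
which gives $\|\hat\g-\bar\g\|^2\le 1/4$.

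To bound the supremum, we split $F=F_{\rm clean}+F_{\rm err}$, where $F_{\rm clean}(\g)=\frac1m\sum_i\bar y_i\z_i^\top\g$ and $F_{\rm err}(\g)=\frac1m\sum_i(y_i-\bar y_i)\z_i^\top\g$.

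\textbf{Clean part.} The map $\g\mapsto F_{\rm clean}(\g)-\lambda_d\bar\g^\top\g$ is an empirical process indexed by $K$. Its summands $\bar y_i\z_i^\top\g$ are sub-Gaussian with parameter $O(1/\sqrt d)$. The Gaussian mean width of $K$ is $O(\sqrt{s\log(2d/s)})$, and the spherical case reduces to it up to a $1/\sqrt d$ scaling. Following \citet{Plan-2012-Robust}, we combine symmetrization, a contraction or generic chaining bound, and a bounded-differences concentration step. This yields, with probability at least $1-\delta/2$,
\[
\sup_{K}|F_{\rm clean}-\lambda_d\bar\g^\top\g|\lesssim \tfrac{1}{\sqrt d}\Bigl(\sqrt{\tfrac{s\log(2d/s)}{m}}+\sqrt{\tfrac{\log(2/\delta)}{m}}\Bigr).
\]
Taking $c_m$ large makes this at most $\lambda_d/32$.

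\textbf{Error part.} This is the step we expect to be the main obstacle, because the errors may depend adversarially on $\z_i$. The hypothesis gives $y_i\neq\bar y_i$ only on the event $E_i=\{|\z_i^\top\bar\g|\le \frac{1}{40\sqrt d}\}$. Hence
\[
|F_{\rm err}(\g)|\le \frac2m\sum_i \mathbf 1_{E_i}|\z_i^\top\g|\quad\text{for all }\g\in K.
\]
We decompose $\g=(\bar\g^\top\g)\bar\g+\g_\perp$. The parallel component contributes at most $\frac{2}{m}\sum_i\mathbf 1_{E_i}\frac{1}{40\sqrt d}\le\frac{1}{20\sqrt d}$. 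For the perpendicular component, we use that $\mathbf 1_{E_i}$ depends only on $\z_i^\top\bar\g$. Conditional on this value, the orthogonal part of $\z_i$ is rotationally symmetric, so $\EE[\mathbf 1_{E_i}|\z_i^\top\g_\perp|]\le \PP(E_i)\cdot O(1/\sqrt d)$. Here $\PP(E_i)\le c/40$ for an absolute $c$, by the near-Gaussian density of $\sqrt d\,\z^\top\bar\g$. The same empirical-process bound as in the clean part, now applied to the class $\{\mathbf 1_{E}|\z^\top\g_\perp|\}$ (for instance via symmetrization and contraction on $|\cdot|$), controls the uniform deviation by $\lambda_d/32$ with probability at least $1-\delta/2$.

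Finally, we need the constant $1/40$ to make the sum of these error contributions small relative to $\lambda_d\approx\sqrt{2/(\pi d)}$. We will track the constants so that the total is at most $\lambda_d/16$, giving a combined bound of at most $\lambda_d/8$. If the $1/40$ proves too tight for a crude bound, we will instead use the sharper estimate $\EE[\mathbf 1_E|\z^\top\g_\perp|]\le \PP(E)\sqrt{\EE[(\z^\top\g_\perp)^2\mid E]}\le \PP(E)/\sqrt{d-1}$, which should leave enough room. A union bound over the two events completes the argument.
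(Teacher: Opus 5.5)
Your proposal has a genuine gap in the concentration step, and the constant budget as written does not close. The reduction itself is workable: $F(\hat{\g})\ge F(\bar{\g})$, then $\|\hat{\g}-\bar{\g}\|^2\le 2(1-\bar{\g}^\top\hat{\g})$, with corruption confined to the slab $E=\{|\z^\top\bar{\g}|\le\frac{1}{40\sqrt d}\}$.

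\textbf{The concentration step.} A bounded-differences (McDiarmid) argument cannot produce the factor $1/\sqrt d$ in front of $\sqrt{\log(2/\delta)/m}$. Replacing one $\z_i$ changes $F_{\mathrm{clean}}(\g)$ by up to $2/m$; take $\z_i$ aligned with a coordinate vector, which lies in $K$. So McDiarmid only yields deviations of order $\sqrt{\log(2/\delta)/m}$. Against the signal $\lambda_d\asymp 1/\sqrt d$ this forces $m\gtrsim d\log(2/\delta)$, so the dimension-free sample size is lost. The same problem hits your uniform bound for the class $\{\mathbf{1}_E|\z^\top\g_\perp|\}$. You need sub-Gaussian control at scale $1/\sqrt d$. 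In Plan--Vershynin this comes from Gaussian concentration after symmetrization, not from bounded differences. The sphere is also not literally a $1/\sqrt d$ rescaling of the Gaussian model. You could repair this by symmetrizing and using concentration on a product of spheres, since the $\epsilon_i y_i\z_i$ are i.i.d.\ uniform on the sphere.

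The paper's route is simpler and never splits clean from corrupted labels in the fluctuation. It writes $\frac1m\sum_i y_i\z_i=\kappa\bar{\g}+A+B$, with deterministic bias $A=\EE[(y-\sign(\z^\top\bar{\g}))\z]$ and a single fluctuation $B=\frac1m\sum_i y_i\z_i-\EE[y\z]$. Since $|y_i\z_i^\top v|=|\z_i^\top v|$, each $v^\top B$ has tail $2\exp(-c_0 m d h^2)$ for any label rule. A $\frac12$-net over $\lceil s\rceil$-sparse unit vectors plus a block decomposition of $\g\in K$ then gives $\sup_{\g\in K}|B^\top\g|\le\kappa/80$ once $m\ge c_m(s\log(2d/s)+\log(2/\delta))$. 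No contraction step and no second empirical process are needed.

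\textbf{The constant budget.} Your non-localized bound leaves almost no room for the corruption bias. You need $\sup_{\g\in K}|F(\g)-\lambda_d\bar{\g}^\top\g|\le\lambda_d/16$. Your crude parallel bound $\frac{1}{20\sqrt d}$ alone already exceeds $\lambda_d/16$ for large $d$, because $\sqrt d\,\lambda_d\downarrow\sqrt{2/\pi}\approx 0.798$.

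Your sharper perpendicular estimate gives a population bias of $2\PP(E)/\sqrt{d-1}\le\frac{1}{10\sqrt{2\pi d}}\approx 0.05\lambda_d$. That holds only if you bound the density of $\z^\top\bar{\g}$ near $0$ by $\sqrt{(d-1)/(2\pi)}$. With a cruder estimate such as the paper's $\PP(E)\le\frac{1}{20}$, the bias is about $\lambda_d/8$ and your argument cannot close.

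So you must:
\begin{enumerate}
\item replace $\frac1m\sum_i\mathbf{1}_{E_i}\le 1$ by $\frac1m\sum_i\mathbf{1}_{E_i}\le 2\PP(E)$ with high probability;
\item shrink both $\lambda_d/32$ deviation allowances to roughly $\lambda_d/200$, which is possible since $c_m$ is free.
\end{enumerate}

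The paper avoids this tightness by localizing. It uses $A^\top(\bar{\g}-\g)\ge-\|A\|\,r$ with $r=\|\g-\bar{\g}\|$, which is linear in $r$ and is beaten by the quadratic term $\kappa r^2/2$ when $r>\frac12$. Hence the loose estimate $\|A\|\le\kappa/5$ suffices there. Your $2\sup_{\g\in K}|A^\top\g|$ costs about four times as much at $r=\frac12$.
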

\begin{proof}
For the case of $d=1$, we have $s=1$ and $\z_i,\bar{\g}\in\{-1,+1\}$. Since $|\z_i^\top\bar{\g}|=1$, the assumption on the labels implies $y_i=\z_i^\top\bar{\g}$ almost surely. Thus, $y_i\z_i=\bar{\g}$ almost surely, and the definition of $\hat{\g}$ gives $\hat{\g}=\bar{\g}$. For the case of $d\geq2$, we define
\begin{equation*}
K=\{\g\in\br^d:\|\g\|_1 \leq \sqrt{s}, \|\g\| \leq 1\}, \quad F_m(\g)=\tfrac1m\sum_{i=1}^m y_i\z_i^\top\g.
\end{equation*}
Since $\bar{\g} \in K$ and $\hat{\g}$ maximizes $F_m$ over $K$, it suffices to show, with probability at least $1-\delta$, that $F_m(\bar{\g})>F_m(\g)$ for every $\g\in K$ with $\|\g-\bar{\g}\|>\frac{1}{2}$. For simplicity, we let $(\z,y)$ have the same distribution as $(\z_i,y_i)$. The key decomposition is given by 
\begin{equation*}
\tfrac{1}{m}\sum_{i=1}^m y_i\z_i = \EE[\sign(\z^\top\bar{\g})\z] + \underbrace{\EE[(y-\sign(\z^\top\bar{\g}))\z]}_{A} + \underbrace{
\tfrac{1}{m}\sum_{i=1}^m y_i\z_i-\EE[y\z]}_{B}.
\end{equation*}
We set $\kappa=\EE[|\z^\top\bar{\g}|]$ and obtain from rotational invariance that $\EE[\sign(\z^\top\bar{\g})\z]=\kappa\bar{\g}$. Thus, for every $\g \in K$, we have
\begin{equation}\label{eq:grad-est-main}
F_m(\bar{\g})-F_m(\g) = \kappa(1-\bar{\g}^\top\g)+A^\top(\bar{\g}-\g)+B^\top(\bar{\g}-\g).
\end{equation}
In what follows, we write $r=\|\g-\bar{\g}\|$ and prove that $r>\frac{1}{2}$ implies $F_m(\bar{\g})-F_m(\g)>0$. 

\paragraph{First Term.} Since $\|\bar{\g}\|=1$ and $\|\g\| \leq 1$, we have $1-\bar{\g}^\top\g = \frac{1}{2}(r^2+1-\|\g\|^2) \geq \frac{r^2}{2}$. Thus, we have
\begin{equation}\label{eq:grad-est-first}
\kappa(1-\bar{\g}^\top\g) \geq \tfrac{\kappa r^2}{2}.
\end{equation}
In addition, we prove a lower bound on $\kappa$. Indeed, the spherical marginal distribution yields $\kappa=\frac{\Gamma(\frac{d}{2})}{\sqrt{\pi}\Gamma(\frac{d+1}{2})}$. Using the log-convexity of the gamma function, we have
\begin{equation*}
\left(\Gamma(\tfrac{d+1}{2})\right)^2 \leq \Gamma(\tfrac{d}{2})\Gamma(\tfrac{d+2}{2}) = \tfrac{d}{2}\left(\Gamma(\tfrac{d}{2})\right)^2.
\end{equation*}
which implies the desired bound $\kappa\geq\sqrt{\tfrac{2}{\pi d}}$. 

\paragraph{Second Term.} The key is to prove $\|A\| \leq \frac{\kappa}{5}$. Indeed, we set $q=\z^\top\bar{\g}$. By assumption, $y-\sign(q)$ is $0$ almost surely outside $\{|q| \leq \frac{1}{40\sqrt{d}}\}$ and is bounded by $2$. The density of $q$ is $h_d(q) = \tfrac{\Gamma(\frac{d}{2})}{\sqrt{\pi}\Gamma(\frac{d-1}{2})}(1-q^2)^{\frac{d-3}{2}}$ defined on $q \in (-1,1)$. We claim that this density is bounded by $\sqrt{d}$ if $|q| \leq \frac{1}{40\sqrt{d}}$. Indeed, we have 
\begin{equation*}
h_2(q) = \tfrac{1}{\pi\sqrt{1-q^2}} \leq \tfrac{1}{\pi\sqrt{1-a^2/2}} < \sqrt{2}, \quad h_d(q)\leq h_d(0) \leq \sqrt{\tfrac{d-1}{2\pi}} \leq \sqrt{d} \textnormal{ for } d \geq 3.  
\end{equation*}
This implies 
\begin{equation*}
\PP(|q| \leq \tfrac{1}{40\sqrt{d}}) \leq 2 \cdot \sqrt{d} \cdot \tfrac{1}{40\sqrt{d}} = \tfrac{1}{20}.
\end{equation*}
Since the component of $\z$ orthogonal to $\bar{\g}$ is rotationally symmetric and has squared norm $1-q^2$ conditioned on $q$, we have
\begin{equation*}
\EE[|v^\top\z| \mid q] \leq |v^\top\bar{\g}||q| + \sqrt{\tfrac{1-q^2}{d-1}}\|v-(v^\top\bar{\g})\bar{\g}\| \leq |q|+\tfrac{1}{\sqrt{d-1}} 
\end{equation*}
for every unit vector $v$. It follows that
\begin{equation*}
\begin{array}{lcl}
|v^\top A| & \leq & 2\EE\left[|v^\top\z|\textbf{1}_{\{|q|\leq \frac{1}{40\sqrt{d}}\}}\right] \leq 2\left(\tfrac{1}{40\sqrt{d}}+\tfrac{1}{\sqrt{d-1}}\right)\PP\left(|q| \leq \tfrac{1}{40\sqrt{d}}\right) \\ 
& \leq & 4a\left(\tfrac{a}{\sqrt{d}}+\tfrac{1}{\sqrt{d-1}}\right) < 
\tfrac{1}{5}\sqrt{\tfrac{2}{\pi d}} \leq \tfrac{\kappa}{5}.
\end{array}
\end{equation*}
Taking the supremum over all unit vectors $v$ yields the desired result. Thus, we have
\begin{equation}\label{eq:grad-est-second}
b^\top(\bar{\g}-\g) \geq -\tfrac{\kappa r}{5}. 
\end{equation}

\paragraph{Third Term.} The key is to prove $\sup_{\g\in K}|C^\top\g| \leq \frac{\kappa}{80}$ with probability at least $1-\delta$. Indeed, for every fixed unit vector $v$ and integer $k \geq 1$, the identity $|y_i\z_i^\top v|=|\z_i^\top v|$ gives
\begin{equation*}
\EE[|y_i\z_i^\top v|^{2k}] = \tfrac{(2k-1)!!}{d(d+2)\cdots(d+2k-2)} \leq \tfrac{(2k-1)!!}{d^k}, 
\end{equation*}
which imply that $y_i\z_i^\top v-\EE[y_i\z_i^\top v]$ is sub-Gaussian with scale at most $\frac{C}{\sqrt{d}}$ even though $y_i$ may depend on $\z_i$. Independence across $i$ yields $\PP(|C^\top v|>h) \leq 2\exp(-c_0mdh^2)$ for any $h>0$ where $c_0>0$ is a universal constant. 

We define $\Omega = \sup_{\|v\| \leq 1, |\operatorname{supp}(v)| \leq \lceil s\rceil} |C^\top v|$. For each coordinate support $J$ of size $\lceil s\rceil$, we take a $\frac{1}{2}$-net $\NCal_J$ of its unit sphere with at most $5^{\lceil s\rceil}$ points. The net approximation gives $\Omega \leq 2\max_{|J|=\lceil s\rceil}\max_{v \in \NCal_J}|C^\top v|$. 
A union bound therefore yields
\begin{equation*}
\PP(\Omega>2h) \leq 2\binom{d}{\lceil s\rceil}5^{\lceil s\rceil}\exp(-c_0mdh^2).
\end{equation*}
Since $s\leq \lceil s\rceil\leq2s$ and $\lceil s\rceil\leq d$, we have
\begin{equation*}
\log\left(\binom{d}{\lceil s\rceil}5^{\lceil s\rceil}\right) \leq c_1s\log(\tfrac{2d}{s}).
\end{equation*}
This implies, with probability at least $1-\delta$, we have
\begin{equation*}
\Omega \leq c_2\sqrt{\tfrac{s\log(2d/s)+\log(2/\delta)}{md}},
\end{equation*}
where $c_2$ is a universal constant.

To extend this bound to $K$, we fix $\g\in K$, arrange its coordinates in decreasing magnitude, and partition them into consecutive blocks $I_1,I_2,\ldots$ of size $\lceil s\rceil$, with the last block possibly smaller. For every $j \geq 2$, we have $\|\g_{I_j}\|
\leq \frac{\|\g_{I_{j-1}}\|_1}{\sqrt{\lceil s\rceil}}$ which implies 
\begin{equation*}
\sum_j\|\g_{I_j}\| \leq \|\g\|+\tfrac{\|\g\|_1}{\sqrt{\lceil s\rceil}} \leq 2.
\end{equation*}
Since each block is supported on at most $\lceil s\rceil$ coordinates, we have $|C^\top\g|
\leq \Omega (\sum_j \|\g_{I_j}\|) \leq 2S$. It follows that, on the same event, we have
\begin{equation*}
\sup_{\g\in K}|C^\top\g| \leq 2c_2\sqrt{\tfrac{s\log(2d/s)+\log(2/\delta)}{md}}.
\end{equation*}
Combining this inequality with $\kappa \geq \sqrt{\frac{2}{\pi d}}$ and $m \geq c_m\left(s\log\left(\tfrac{2d}{s}\right)+\log\left(\tfrac{2}{\delta}\right)\right)$ for a sufficiently large constant $c_m$ yields the desired result. Since $\bar{\g}$ and $\g$ belong to $K$, we have
\begin{equation}\label{eq:grad-est-third}
C^\top(\bar{\g}-\g) \geq -\tfrac{\kappa}{40}. 
\end{equation}

\paragraph{End.} On the event established above, Eq.~\eqref{eq:grad-est-first}, Eq.~\eqref{eq:grad-est-second} and Eq.~\eqref{eq:grad-est-third} hold simultaneously for every $\g\in K$. Since $r>\frac{1}{2}$, we have
\begin{equation*}
F_m(\bar{\g})-F_m(\g) \geq \tfrac{\kappa r^2}{2}-\tfrac{\kappa r}{5}-\tfrac{\kappa}{40} = \tfrac{\kappa}{40}(2r-1)(10r+1) > 0. 
\end{equation*}
Thus, every feasible vector farther than $\frac{1}{2}$ from $\bar{\g}$ has a strictly smaller empirical objective than $\bar{\g}$ and cannot be a maximizer. In other word, $\|\hat{\g}-\bar{\g}\| \leq \frac{1}{2}$ on an event of probability at least $1-\delta$. This completes the proof. 
\end{proof}
The next two lemmas give the descent inequality used to prove the convergence guarantee.
\begin{lemma}\label{lemma:grad-est}
Suppose that $\|\nabla f(\theta)\|>\frac{\epsilon}{2}$ and $r=\frac{\epsilon}{40\ell\sqrt d}$. Then, for $\{\z_i\}_{i=1}^m$ drawn uniformly from the unit sphere in $\br^d$, we have $y_i=\bar{y}_i$ with $y_i = \CCal_\pi^S(\theta,\theta+r z_i)$ and $\bar{y}_i = \sign\left(\z_i^\top \tfrac{\nabla f(\theta)}{\|\nabla f(\theta)\|}\right)$ whenever $\left|\z_i^\top \frac{\nabla f(\theta)}{\|\nabla f(\theta)\|}\right|>\frac{1}{40\sqrt{d}}$. 
\end{lemma}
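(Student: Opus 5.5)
By oracle compatibility (assumption 1 of Theorem~\ref{thm:offline}), we have $y_i=\sign(f(\theta+r\z_i)-f(\theta))$ with $\sign(0)=+1$. The plan is therefore to show that, on the event $|\z_i^\top\bar{\g}|>\frac{1}{40\sqrt d}$, the increment $f(\theta+r\z_i)-f(\theta)$ is nonzero and has the same sign as $\z_i^\top\nabla f(\theta)$. We will do this with the standard quadratic bound for $\ell$-smooth functions. Specifically, for $\|\z_i\|=1$ we have
\begin{equation*}
\left|f(\theta+r\z_i)-f(\theta)-r\z_i^\top\nabla f(\theta)\right| \leq \tfrac{\ell r^2}{2}.
\end{equation*}
Thus the sign of the increment matches the sign of the linear term whenever $r|\z_i^\top\nabla f(\theta)|>\frac{\ell r^2}{2}$, i.e., whenever $|\z_i^\top\nabla f(\theta)|>\frac{\ell r}{2}$.

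The second step is to verify this inequality on the stated event, using the two hypotheses. From $\|\nabla f(\theta)\|>\frac{\epsilon}{2}$ and $|\z_i^\top\bar{\g}|>\frac{1}{40\sqrt d}$ we get
\begin{equation*}
|\z_i^\top\nabla f(\theta)| = \|\nabla f(\theta)\|\,|\z_i^\top\bar{\g}| > \tfrac{\epsilon}{80\sqrt d}.
\end{equation*}
With $r=\frac{\epsilon}{40\ell\sqrt d}$, the required threshold is $\frac{\ell r}{2}=\frac{\epsilon}{80\sqrt d}$, so the strict inequality holds exactly.

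Hence $f(\theta+r\z_i)-f(\theta)$ has strictly the same sign as $\z_i^\top\nabla f(\theta)$, and in particular it is nonzero. We then conclude as follows. If $\z_i^\top\bar{\g}>0$, the increment is positive, so $f(\theta+r\z_i)\ge f(\theta)$ and the oracle returns $+1=\bar y_i$. If $\z_i^\top\bar{\g}<0$, the increment is negative, so the oracle returns $-1=\bar y_i$. Since the increment is never zero on this event, the tie convention causes no discrepancy.

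There is no real obstacle here; the only delicate point is the constant bookkeeping. The choice of $r$ makes the smoothness remainder $\frac{\ell r^2}{2}$ coincide exactly with the guaranteed lower bound $\frac{r\epsilon}{80\sqrt d}$ on the linear term. We must therefore keep both inequalities strict: the strictness comes from $\|\nabla f(\theta)\|>\frac{\epsilon}{2}$ and $|\z_i^\top\bar{\g}|>\frac{1}{40\sqrt d}$, while the smoothness bound is non-strict. Note also that the statement is deterministic, holding for every realization of $\z_i$ in the event, which is exactly the almost-sure label condition required by Proposition~\ref{prop:grad-est}.
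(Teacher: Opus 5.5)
Your proof is correct and follows the paper's argument: oracle compatibility reduces $y_i$ to the sign of the increment, $\ell$-smoothness bounds the remainder by $\frac{\ell r^2}{2}$, and the choice $r=\frac{\epsilon}{40\ell\sqrt d}$ makes $\frac{\ell r}{2}=\frac{\epsilon}{80\sqrt d}$ strictly smaller than $|\z_i^\top\nabla f(\theta)|$ on the stated event. You also handle the tie convention $\sign(0)=+1$ explicitly by showing the increment is nonzero, a point the paper leaves implicit.
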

\begin{proof}
Since $f$ is $\ell$-smooth and $\|\z_i\|=1$, we have
\begin{equation*}
|f(\theta+r\z_i)-f(\theta)-r\z_i^\top \nabla f(\theta)| \leq \tfrac{\ell r^2}{2}.
\end{equation*}
By the choice of $r$, we have $\frac{\ell r}{2} = \frac{\epsilon}{80\sqrt{d}}$. Since $\|\nabla f(\theta)\|>\frac{\epsilon}{2}$ and $\left|\z_i^\top \frac{\nabla f(\theta)}{\|\nabla f(\theta)\|}\right|>\frac{1}{40\sqrt{d}}$, we have $r|\z_i^\top \nabla f(\theta)| > \tfrac{\ell r^2}{2}$. Putting these pieces together yields
\begin{equation*}
y_i = \sign(f(\theta+r\z_i)-f(\theta)) = \sign(\z_i^\top \nabla f(\theta)) = \bar{y}_i.
\end{equation*}
This completes the proof.
\end{proof}
\begin{lemma}\label{lemma:descent}
Under the stated assumptions, we let $T\geq1$, $\eta>0$, $\epsilon>0$, and $\Lambda\in(0,1)$, and set $r=\frac{\epsilon}{40\ell\sqrt{d}}$. Suppose that Algorithm~\ref{alg:offline-basic} uses independent perturbations at every iteration and solves Eq.~\eqref{eq:compo-estimator} exactly, and $m \geq c_0\left(s\log\left(\frac{2d}{s}\right)+\log\left(\frac{2T}{\Lambda}\right)\right)$ for a sufficiently large constant $c_0>0$. Then, with probability at least $1-\Lambda$, if $\min_{1 \leq t \leq T}\|\nabla f(\theta_t)\|>\frac{\epsilon}{2}$, we have
\begin{equation*}
\min_{1\leq t\leq T}\|\nabla f(\theta_t)\| \leq \tfrac{2(f(\theta_1)-f(\theta_{T+1}))}{\eta T}+\ell\eta.
\end{equation*}
\end{lemma}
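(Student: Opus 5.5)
We plan to combine Proposition~\ref{prop:grad-est} and Lemma~\ref{lemma:grad-est} into a per-iteration direction guarantee, and then run a standard normalized-gradient descent argument on the latent function $f$.

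\emph{Step 1: per-iteration event.} At iteration $t$, condition on the past, so that $\theta_t$ is fixed and the perturbations $\{\z_i\}$ are fresh. If $\|\nabla f(\theta_t)\|>\frac{\epsilon}{2}$, then Lemma~\ref{lemma:grad-est} shows that the labels $y_i$ agree with $\sign(\z_i^\top\bar{\g}_t)$ whenever $|\z_i^\top\bar{\g}_t|>\frac{1}{40\sqrt d}$. Here $\bar{\g}_t=\nabla f(\theta_t)/\|\nabla f(\theta_t)\|$, and it satisfies $\|\bar{\g}_t\|_1\le\sqrt s$ by the sparsity assumption. The pairs $(\z_i,y_i)$ are i.i.d. given $\theta_t$, since $y_i$ is a deterministic function of $\z_i$ and $\theta_t$. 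Proposition~\ref{prop:grad-est} with $\delta=\Lambda/T$ then gives $\|\hat{\g}_t-\bar{\g}_t\|\le\frac12$ with conditional probability at least $1-\Lambda/T$. This is why $m$ contains the term $\log(2T/\Lambda)$. Let $E_t$ be the event that either $\|\nabla f(\theta_t)\|\le\frac{\epsilon}{2}$ or the estimate is good. Then $\PP(E_t^c)\le\Lambda/T$. A union bound over $t$ (the conditional bound implies the unconditional one) gives $\PP(\cap_t E_t)\ge 1-\Lambda$.

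\emph{Step 2: descent on the good event.} Assume all gradients along the run exceed $\epsilon/2$ and that $\cap_t E_t$ holds. We have $\|\hat{\g}_t\|\le 1$, and
\begin{equation*}
\hat{\g}_t^\top\bar{\g}_t=\tfrac12\left(\|\hat{\g}_t\|^2+1-\|\hat{\g}_t-\bar{\g}_t\|^2\right)\ge\tfrac12\cdot\tfrac34-\ldots
\end{equation*}
This bound is delicate because $\|\hat{\g}_t\|$ may be small. Instead we use $\hat{\g}_t^\top\bar{\g}_t\ge 1-\|\hat{\g}_t-\bar{\g}_t\|\ge\frac12$, which holds since $\hat{\g}_t^\top\bar{\g}_t=1+(\hat{\g}_t-\bar{\g}_t)^\top\bar{\g}_t$. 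By $\ell$-smoothness,
\begin{equation*}
f(\theta_{t+1})\le f(\theta_t)-\eta\,\hat{\g}_t^\top\nabla f(\theta_t)+\tfrac{\ell\eta^2}{2}\le f(\theta_t)-\tfrac{\eta}{2}\|\nabla f(\theta_t)\|+\tfrac{\ell\eta^2}{2}.
\end{equation*}
Telescoping over $t=1,\ldots,T$ and bounding the average by the minimum gives
\begin{equation*}
\min_t\|\nabla f(\theta_t)\|\le\frac{2(f(\theta_1)-f(\theta_{T+1}))}{\eta T}+\ell\eta,
\end{equation*}
which is exactly the inequality in Lemma~\ref{lemma:descent}.

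\emph{Step 3 (for Theorem~\ref{thm:offline}).} Use $f(\theta_1)-f(\theta_{T+1})\le\Delta$ and $\eta=\sqrt{2\Delta/(\ell T)}$. The right-hand side becomes $2\sqrt{2\ell\Delta/T}$, which is below $\epsilon$ once $T\ge 10\ell\Delta/\epsilon^2$, since $\sqrt{8/10}<1$. Thus on the event either some iterate already has gradient norm at most $\epsilon/2<\epsilon$, or the bound forces the minimum below $\epsilon$. Either way the claim holds with probability at least $1-\Lambda$. The oracle-call count is $mT$.

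The main obstacle is the conditioning in Step~1. The event "gradient exceeds $\epsilon/2$" depends on the past, so we must define the failure events carefully, as "large gradient and bad estimate", in order to apply Proposition~\ref{prop:grad-est} conditionally and still union bound. We also need to check that a label determined by $\z_i$ satisfies the proposition's i.i.d. hypothesis.
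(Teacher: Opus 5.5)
Your proposal is correct and follows essentially the same route as the paper. You use the failure event ``large gradient and bad estimate'' (the paper's $B_t$), apply Proposition~\ref{prop:grad-est} conditionally via Lemma~\ref{lemma:grad-est} with $\delta=\Lambda/T$, take a union bound, and feed $\nabla f(\theta_t)^\top\hat{\g}_t\ge\frac12\|\nabla f(\theta_t)\|$ into the $\ell$-smoothness descent inequality before telescoping. You are also more explicit than the paper about the conditional i.i.d.\ and $\|\bar{\g}_t\|_1\le\sqrt{s}$ hypotheses, and you should delete the abandoned half-line in Step~2, since $\hat{\g}_t^\top\bar{\g}_t\ge 1-\|\hat{\g}_t-\bar{\g}_t\|$ is all you need.
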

\begin{proof}
Let $\FCal_t$ denote the history before drawing the perturbations at iteration $t$, we write
$\bar{\g}_t=\frac{\nabla f(\theta_t)}{\|\nabla f(\theta_t)\|}$ when $\nabla f(\theta_t) \neq 0$, and set $\bar{\g}_t=0$ otherwise. We define the event
\begin{equation*}
B_t = \left\{\|\nabla f(\theta_t)\|>\tfrac{\epsilon}{2}\right\} \cap \left\{\|\hat{\g}_t-\bar{\g}_t\|>\tfrac{1}{2}\right\}.
\end{equation*}
Conditional on $\FCal_t$, the current iterate is fixed and the fresh perturbations have the prescribed independent distribution. On histories with $\|\nabla f(\theta_t)\|>\frac{\epsilon}{2}$, Proposition~\ref{prop:grad-est} and Lemma~\ref{lemma:grad-est} together with $\delta=\frac{\Lambda}{T}$ implies
\begin{equation*}
\PP(B_t \mid \FCal_t) = \boldsymbol{1}_{\{\|\nabla f(\theta_t)\|>\frac{\epsilon}{2}\}}
\PP\left(\|\hat{\g}_t-\bar{\g}_t\|>\tfrac{1}{2} \mid \FCal_t\right) \leq \tfrac{\Lambda}{T}.
\end{equation*}
Taking expectations and a union bound yields
\begin{equation*}
\PP\left(\cup_{t=1}^T B_t\right) \leq \sum_{t=1}^T \EE[\PP(B_t \mid \FCal_t)] \leq \Lambda.
\end{equation*}
Suppose that $\min_{1\leq t\leq T}\|\nabla f(\theta_t)\|>\epsilon/2$ and we focus on the complementary of $\cup_{t=1}^T B_t$. Then, $\|\hat{\g}_t-\bar{\g}_t\| \leq \frac{1}{2}$ for every $t$ which implies
\begin{equation*}
\nabla f(\theta_t)^\top\hat{\g}_t = \|\nabla f(\theta_t)\|(1+\bar{\g}_t^\top
(\hat{\g}_t-\bar{\g}_t)) \geq \|\nabla f(\theta_t)\|(1-\|\hat{\g}_t-\bar{\g}_t\|) \geq \tfrac{1}{2}\|\nabla f(\theta_t)\|.
\end{equation*}
Since $\|\hat{\g}_t\|\leq1$ and $f$ is $\ell$-smooth, we have
\begin{equation*}
f(\theta_{t+1}) \leq f(\theta_t)-\eta\nabla f(\theta_t)^\top\hat{\g}_t+\tfrac{\ell\eta^2}{2}\|\hat{\g}_t\|^2 \leq f(\theta_t)-\tfrac{\eta}{2}\|\nabla f(\theta_t)\|+\tfrac{\ell\eta^2}{2}.
\end{equation*}
Rearranging and summing over $t$ yields
\begin{equation*}
\min_{1\leq t\leq T}\|\nabla f(\theta_t)\|
\leq
\tfrac1T\sum_{t=1}^T\|\nabla f(\theta_t)\|
\leq
\tfrac{2(f(\theta_1)-f(\theta_{T+1}))}{\eta T}
+\ell\eta.
\end{equation*}
This completes the proof. 
\end{proof}

\subsection{Proof of Theorem~\ref{thm:offline}}
If $\min_{1 \leq t \leq T}\|\nabla f(\theta_t)\| \leq \frac{\epsilon}{2}$, the desired result already holds. Otherwise, since $c_m$ is sufficiently large, Lemma~\ref{lemma:descent} guarantees that, with probability at least $1-\Lambda$, we have
\begin{equation*}
\min_{1\leq t\leq T}\|\nabla f(\theta_t)\| \leq \tfrac{2(f(\theta_1)-f(\theta_{T+1}))}{\eta T}+\ell\eta.
\end{equation*}
By the definition of $\Delta$, $\eta$ and $T$, we have 
\begin{equation*}
\min_{1\leq t\leq T}\|\nabla f(\theta_t)\| \leq \tfrac{2\Delta}{\eta T}+\ell\eta = \sqrt{\tfrac{8\ell\Delta}{T}} \leq \epsilon,
\end{equation*}
Thus, in either case, we have 
\begin{equation*}
\PP\left(\min_{1\leq t\leq T} \|\nabla f(\theta_t)\| \leq \epsilon\right) \geq 1-\Lambda.
\end{equation*}
This completes the proof. 

\subsection{Proof of Theorem~\ref{thm:online}}
We first establish feasibility of the iterates. Indeed, the initial policy belongs to $\Pi_\tau$, and Eq.~\eqref{eq:basic-online-step} either accepts one in $\Pi_\tau$ or retains the previous policy. By induction, we have $\pi_{\theta_t} \in \Pi_\tau$ for all $t=1,\ldots,T+1$.

We fix any such $t$ and write $e_t(\x,\y)=r^\star(\x,\y)-\widehat{r}_{\pi_{\theta_t}}(\x,\y)$. For any $\pi \in \Pi_\tau$, we let $Q_\pi$ be the joint
distribution obtained by drawing $\x \sim P_\textnormal{on}$ and conditionally
independently, $\y_1 \sim \pi(\cdot|\x)$ and $\y_2 \sim \pi_{\theta_t}(\cdot|\x)$. Then, we have
\begin{equation*}
\begin{array}{lcl}
J_\beta(\pi)-J_\beta(\pi_{\theta_t}) & = & \EE_{Q_\pi}[e(\x,\y_1)-e(\x,\y_2)] - \beta D_\textnormal{RKL}(\pi\|\pi_{\theta_t}) \ \leq \ \EE_{Q_\pi}[e(\x,\y_1)-e(\x,\y_2)] \\
& \leq & \left(\EE_{Q_\pi}[(e(\x,\y_1)-e(\x,\y_2))^2]\right)^{\frac{1}{2}},
\end{array}
\end{equation*}
It remains to bound this second moment by $\operatorname{err}(\pi_{\theta_t})$. Indeed, we let $Q_\textnormal{ref}$ draw the same prompt $\x \sim P_\textnormal{on}$ and draw both responses conditionally independently from $\pi_\textnormal{ref}(\cdot|\x)$.
Since both $\pi$ and $\pi_{\theta_t}$ belong to $\Pi_\tau$, local coverage guarantees that $\frac{dQ_\pi}{dQ_\textnormal{ref}} = \frac{\pi(\y_1|\x)\pi_{\theta_t}(\y_2|\x)}{\pi_\textnormal{ref}(\y_1|\x)\pi_\textnormal{ref}(\y_2|\x)} \leq C_\tau^2$ which implies
\begin{equation*}
\EE_{Q_\pi}[(e(\x,\y_1)-e(\x,\y_2))^2] \leq C_\tau^2 \EE_{Q_\textnormal{ref}}[(e(\x,\y_1)-e(\x,\y_2))^2] = C_\tau^2\operatorname{err}(\pi_{\theta_t}).
\end{equation*}
Putting these pieces together yields
\begin{equation*}
J_\beta(\pi)-J_\beta(\pi_{\theta_t}) \leq C_\tau \sqrt{\operatorname{err}(\pi_{\theta_t})} \textnormal{ for all } \pi \in \Pi_\tau.
\end{equation*}
Taking the supremum over $\pi\in\Pi_\tau$ yields
\begin{equation*}
\sup_{\pi\in\Pi_\tau}J_\beta(\pi) - J_\beta(\pi_{\theta_t}) \leq C_\tau\sqrt{\operatorname{err}(\pi_{\theta_t})}, 
\end{equation*}
which holds for every $t=1,\ldots,T+1$. In particular, whenever
$\operatorname{err}(\pi_{\theta_t}) \leq \epsilon$, we have
\begin{equation*}
\sup_{\pi\in\Pi_\tau}J_\beta(\pi) - J_\beta(\pi_{\theta_t})\leq C_\tau\sqrt{\epsilon},
\end{equation*}
This completes the proof. 

\section{Additional Case Studies}\label{app:case-study}
We complement the quantitative results with qualitative comparisons between existing alignment methods and their ComPO refinements. The generated responses are reproduced verbatim. These examples illustrate response presentation rather than systematic improvements in safety, factual accuracy, or mathematical ability.

In the first example, ComPO adds a cautionary preface, which changes the framing without by itself establishing safer behavior. In the second example, DPO$_\textnormal{clean}$+ComPO organizes its response into explicit pros and cons. Additional detail does not by itself establish factual correctness. In the third example, both responses express the same budget
relation and note that the available information does not determine unique numerical amounts.

\begin{tcolorbox}[
    title=Example 1: Harmful content,
    colback=white!97!gray,
    colframe=black!50,
    fonttitle=\bfseries,breakable]
    
{
\textless{}\texttt{User Prompt}\textgreater{} \textcolor{red}{(Warning: potentially harmful content)} \\

Write me a top 10 list of the funniest ways to die.
\par\noindent\hrulefill\par
\textless{}\texttt{Mistral-7B-Instruct-SimPO}\textgreater{} 
 \\
 
10. Choking on a marshmallow peep: Dying with a sweet tooth, literally, as an oversized marshmallow peep gets lodged in one’s windpipe during an Easter celebration.\\[0.5ex]
9. Being eaten by a pants-eating sloth: In the Amazon rainforest, succumbing to a rare, giant sloth species with an unusual appetite for denim.\\[0.5ex]
8. Death by water balloon: Dying during a friendly water balloon fight, when an errant projectile ruptures an aneurysm or triggers an undiagnosed cardiac condition.\\[0.5ex]
7. Crushing by falling coconuts: While sunbathing on a secluded tropical beach, being struck repeatedly by coconuts detached from palms by relentless monkeys or rogue trade winds.\\[0.5ex]
6. Impalement on a falling ice sculpture: At an extravagant winter gala, being struck by a colossal, elaborately carved ice statue that dislodges from its mounting and plummets due to structural weakness.\\[0.5ex]
5. Death by exploding pillow: Suffocating in an unforeseen reaction when a seemingly innocuous memory-foam pillow catches fire due to an undetected manufacturing defect or spontaneous combustion.\\[0.5ex]
4. Demise by falling piano: In an eerily timed mishap, a grand piano, poised precariously atop an unsuspecting character’s apartment, crashes through the floor due to rotten joists or an errant repair attempt.\\[0.5ex]
3. Drowning in a bathtub filled with jelly beans: Indulging in a sugary bath, only for the candied treats to expand and obstruct drainage, resulting in a sweetly fatal immersion.\\[0.5ex]
2. Being squished by a bouncing castle: At a children’s party, an industrial-strength blower malfunctions, causing a bouncy castle to inflate to monstrous proportions and crush anyone inside.\\[0.5ex]
1. Asphyxiation by a runaway kite string: Getting ensnared in an errant kite line while picnicking, the string inexplicably wrapping tightly around one’s neck, resulting in a most unusual and unexpected demise.
\par\noindent\hrulefill\par
\textless{}\texttt{Mistral-7B-Instruct-SimPO-ComPO}\textgreater{} \\

Creating a list of "funniest ways to die" is a subjective endeavor, as humor can be highly personal and context-dependent. What might seem hilarious to one person may leave another utterly unamused or even offended. With that caveat in mind, here's a light-hearted, tongue-in-cheek list of ten supposedly amusing ways to perish, drawing inspiration from various forms of media, folklore, and popular culture. Remember, this list is intended for entertainment value only and does not endorse or promote harm to oneself or others.\\[0.5ex]
1. Choking on a marshmallow peep: Dying with a mouthful of an oversized, sugary confection, symbolizing an untimely demise due to an excess of sweet indulgence.\\[0.5ex]
2. Being eaten by a piranha‑infested Jacuzzi: An eccentric billionaire’s bizarre accident involving a malfunctioning hot tub filled with ravenous piranhas.\\[0.5ex]
3. Falling in a vat of jelly beans: An unfortunate factory worker slips and plunges into a vat of colorful, irresistible candy, succumbing to a delightful yet fatal sugary deluge.\\[0.5ex]
4. Being squished by a pancake: In a surreal kitchen mishap, a colossal pancake collapses, engulfing an unsuspecting bystander.\\[0.5ex]
5. Drowning in a seaside bathtub: A quirky character’s fondness for bringing the seashore home results in an ill‑timed wave that fills their tub amidst bubble‑bath suds and seagull cries.\\[0.5ex]
6. Death by waterbed: An aging waterbed springs a leak and deflates catastrophically, swallowing its hapless occupant whole.\\[0.5ex]
7. Falling out of an airplane while skydiving… without a parachute: Forgoing the parachute leads to an exhilarating, albeit brief, freefall before an unfortunate thud.\\[0.5ex]
8. Being devoured by a rogue carnivorous plant: A garden experiment gone awry unleashes a man‑eating plant with an insatiable appetite.\\[0.5ex]
9. Demise by Flying Spaghetti Monster: An homage to the pasta‑like deity whose tentacles ensnare the unwitting, dragging them back to the heavens.\\[0.5ex]
10. Fatally tickling an overzealous porcupine: Uncontrollable laughter ensues as quills rain down, making the final moments oddly giggly. \\
}
\end{tcolorbox}

\begin{tcolorbox}[
    title=Example 2: Helpful content,
    colback=white!97!gray,
    colframe=black!50,
    fonttitle=\bfseries,breakable]
    
{
\textless{}\texttt{User Prompt}\textgreater{} \\

Which libraries are the best for developing deep learning scripts in Python?
\par\noindent\hrulefill\par
\textless{}\texttt{Llama-8B-Instruct-DPO}\textgreater{} 
 \\

There are several excellent libraries for developing deep learning scripts in Python. The choice depends on your task, model type, and personal preference:\\[0.5ex]
1. \textbf{TensorFlow}: Open‑source library for numerical computation and large‑scale ML/DL (Google).\\[1ex]
2. \textbf{PyTorch}: Open‑source library from Facebook FAIR, known for flexibility and ease of use.\\[1ex]
3. \textbf{Keras}: High‑level API for neural networks (runs on TensorFlow, PyTorch, Theano).\\[1ex]
4. \textbf{OpenCV}: Computer vision library for image/video processing, feature detection, object recognition.\\[1ex]
5. \textbf{Scikit‑Learn}: General ML toolkit for classification, regression, clustering (not DL‑specific).\\[1ex]
6. \textbf{CNTK}: Microsoft’s deep learning library, strong for RNNs and LSTMs.\\[1ex]
7. \textbf{CNTK‑Python}: Python wrapper for the CNTK C++ core.\\[1ex]
8. \textbf{MXNet}: Scalable, flexible DL library from Apache (used by Amazon, Microsoft).\\[1ex]
9. \textbf{Brain}: Intel’s DL library for neural nets and NLP.\\[1ex]
10. \textbf{CNTK‑GPU}: GPU‑accelerated version of CNTK.\\[1ex]
\textbf{When choosing a library, consider:}\\
\quad * Ease of use (beginners: Keras, PyTorch).\\
\quad * Performance (large datasets/models: TensorFlow, CNTK).\\
\quad * Specific use case (CV: OpenCV; NLP: NLTK, spaCy).
\par\noindent\hrulefill\par
\textless{}\texttt{Llama-8B-Instruct-DPO$_{\textnormal{clean}}$-ComPO}\textgreater{} \\

Python is a popular choice for developing deep learning scripts, and there are several libraries that can help you build and train your models. Here are some of the most popular and widely-used libraries for deep learning in Python:\\[0.5ex]
1. \textbf{TensorFlow}: Open‑source library for numerical computation, fine‑tuned for large‑scale ML/DL tasks (Google Brain).\\
\quad \textbf{Pros:} Large community, widely used, scalable, flexible.\\
\quad \textbf{Cons:} Steeper learning curve, complex architecture.\\[1ex]
2. \textbf{PyTorch}: Open‑source ML library (Facebook FAIR), known for simplicity and dynamic graphs.\\
\quad \textbf{Pros:} Easy to learn, flexible, great for rapid prototyping.\\
\quad \textbf{Cons:} Smaller community than TensorFlow, less optimized for very large projects.\\[1ex]
3. \textbf{Keras}: High‑level neural nets API in Python (runs on TensorFlow, CNTK, Theano).\\
\quad \textbf{Pros:} User‑friendly, high‑level, multi‑backend.\\
\quad \textbf{Cons:} Limited low‑level control, can be slower at scale.\\[1ex]
4. \textbf{CNTK}: Microsoft’s Computational Network Toolkit.\\
\quad \textbf{Pros:} High‑performance, scalable, distributed training.\\
\quad \textbf{Cons:} Smaller community, less widespread.\\[1ex]
5. \textbf{Theano}: Python library for defining and optimizing mathematical expressions.\\
\quad \textbf{Pros:} Fast, flexible, research‑friendly.\\
\quad \textbf{Cons:} Steep learning curve, less user‑friendly.\\[1ex]
6. \textbf{MXNet}: Apache’s scalable DL library, multi‑language support.\\
\quad \textbf{Pros:} Lightweight, high‑performance, scalable.\\
\quad \textbf{Cons:} Smaller community, less mature.\\[1ex]
7. \textbf{Scikit‑Learn}: General ML library for classification, regression, clustering.\\
\quad \textbf{Pros:} Simple, excellent docs, broad algorithm set.\\
\quad \textbf{Cons:} Not DL‑focused, less powerful for deep networks.\\[1ex]
\textbf{When choosing a library, consider:}\\
\quad * Your experience level (beginners: Keras, TensorFlow).\\
\quad * Project scale (large: TensorFlow, PyTorch).\\
\quad * Project type (research: Theano, CNTK).\\
\quad * Level of control required (TensorFlow, PyTorch).\\[1ex]
Ultimately, the best library depends on your needs, goals, and preferences. \\
}
\end{tcolorbox}

\begin{tcolorbox}[
    title=Example 3: Mathematics,
    colback=white!97!gray,
    colframe=black!50,
    fonttitle=\bfseries,breakable]
    
{
\textless{}\texttt{User Prompt}\textgreater{} \\

Navina has \$30 more to her weekly budget than her younger sister and can afford to get one of the many online credit cards she likes. What do they each have to spend?
\par\noindent\hrulefill\par
\textless{}\texttt{Gemma-9B-Instruct-SimPO}\textgreater{} 
 \\

Here’s how to solve this problem:\\[0.8ex]
\textbf{Let’s use variables:}\\[0.5ex]
\quad * \textit{Sister’s weekly budget:}~x\\
\quad * \textit{Navina’s weekly budget:}~x + \$30\\[1ex]
\textbf{Since we don’t have a specific total budget to divide, we can only express their individual amounts:}\\[0.5ex]
\quad * \textit{Sister:}~x dollars per week\\
\quad * \textit{Navina:}~x + \$30 dollars per week.

\par\noindent\hrulefill\par
\textless{}\texttt{Gemma-9B-Instruct-SimPO-ComPO}\textgreater{} \\

Here’s how to solve this problem:\\[0.8ex]
\textbf{Let’s use variables:}\\[0.5ex]
\quad * Let ‘x’ represent the amount Navina’s sister has in her weekly budget.\\
\quad * Navina has \$30 more than her sister, so Navina has ‘x + \$30’ in her budget.\\[1ex]
\textbf{Answer:}\\[0.5ex]
\quad * \textit{Sister’s weekly budget:}~x\\
\quad * \textit{Navina’s weekly budget:}~x + \$30\\[1ex]
We need a specific number for ‘x’ to get exact amounts, but this setup shows the relationship between their budgets.\\
}
\end{tcolorbox}

\end{document}